\newcommand{\abs}[1]{\ensuremath {\left| #1 \right|}}
\newcommand{\norm}[1]{\ensuremath{\left\| #1 \right\|}}
\newcommand{\innerproduct}[2]{\ensuremath{\ \left\langle #1 , #2 \right\rangle}}
\newcommand{\inner}[1]{\left\langle#1\right\rangle}
\def\R{\mathbb{R}}
\def\vol{\mathop{\rm vol}\nolimits}
\def\argmin{\mathop{\rm arg\,min}\limits}
\def\sign{\mathop{\rm sign}\limits}
\def\relaxed_opt{\mathrm{relaxed\_opt}}
\def\cut{\mathrm{cut}}
\def\Ncut{\mathrm{NCut}}
\def\ones{\mathbf{1}}
\def\Ones{\mathbb{I}}
\def\subpc{_{p,\chi}}
\def\subpc2{_{p,C}}
\def\Bcut{\mathrm{NCut}}
\def\bal{\mathrm{bal}}
\def\gvol{\mathrm{gvol}}
\def\neigh{\mathrm{neigh}}
\newtheorem{lemma}{Lemma}[section]
\newtheorem{theorem}{Theorem}[section]
\newtheorem{definition}{Definition}[section]
\begin{document}

%

%

\twocolumn[

\aistatstitle{Constrained 1-Spectral Clustering}

\aistatsauthor{ Syama Sundar Rangapuram \And Matthias Hein}

\aistatsaddress{ srangapu@mpi-inf.mpg.de\\ Max Planck Institute for Computer Science\\ Saarland University, Saarbruecken, Germany 
\And hein@cs.uni-saarland.de\\ Saarland University, Saarbruecken\\ Germany} 
]

\begin{abstract}
An important form of prior information in clustering comes in form of cannot-link and must-link constraints. 
We present a generalization of the popular spectral clustering technique which integrates such constraints.
Motivated by the recently proposed $1$-spectral clustering for the unconstrained problem, our method 
is based on a tight relaxation of the constrained normalized cut into a continuous optimization problem.  
Opposite to all other methods which have been suggested for constrained spectral clustering, we can always guarantee to
satisfy all constraints. Moreover, our soft formulation allows to optimize a trade-off between normalized cut and 
the number of violated constraints. An efficient implementation is provided which scales to large datasets. 
We outperform consistently all other proposed methods in the experiments.
%
%
\end{abstract}

\section{Introduction}

The task of clustering is to find a natural grouping of items given e.g. pairwise similarities.
In real world problems, such a natural grouping is often hard to discover with given similarities alone or there is more than one way to group the given items. In either case, clustering methods benefit from domain knowledge that gives bias to the desired clustering. Wagstaff et. al \cite{WagCarSch2001} are the first to consider constrained clustering by encoding available domain knowledge in the form of pairwise must-link (ML, for short) and cannot-link (CL) constraints. By incorporating these constraints into $k$-means they achieve much better performance. Since acquiring such constraint information is relatively easy, constrained clustering has become an active area of research; see \cite{BasDavWag2008} for an overview. 

Spectral clustering is a graph-based
clustering algorithm originally derived as a relaxation of the NP-hard normalized cut problem. The spectral relaxation leads to an eigenproblem for the graph Laplacian, see \cite{HagKah1991, ShiMal2000, Lux2007}. However, it is known that the spectral relaxation can be quite loose \cite{GuaMil98}. More recently, it has been shown that one can equivalently rewrite the discrete (combinatorial) normalized Cheeger cut
problem into a continuous optimization problem using the nonlinear $1$-graph Laplacian \cite{HeiBue2010, SB10} which yields much better cuts than
the spectral relaxation.
In further work it is shown that even all balanced graph cut problems, including normalized cut, have a tight relaxation into a continuous 
optimization problem \cite{HeiSet2011}.

The first approach to integrate constraints into spectral clustering was based on the idea of modifying the 
weight matrix in order to enforce the must-link and cannot-link constraints and then solving the resulting unconstrained problem \cite{KamKleMan2003}. 
Another idea is to adapt the embedding obtained from the first $k$ eigenvectors of the graph Laplacian \cite{LiLiuTan2009}.
Closer to the original normalized graph cut problem are the approaches that start with the optimization problem of the spectral relaxation and 
add constraints that encode must-links and cannot-links \cite{YuShi2004, EriOlsKah2007, XuLiSch2009, WanDav2010}. 
Furthermore,  the case where the constraints are allowed to be inconsistent  is considered in  \cite{ColSauWir2008}.

In this paper we contribute in various ways to the area of graph-based constrained learning. First, we show in the spirit of 
$1$-spectral clustering \cite{HeiBue2010,HeiSet2011}, that the \emph{constrained} normalized cut problem has a \emph{tight}
relaxation as an \emph{unconstrained} continuous optimization problem. Our method, which we call COSC, is the first one in the field of
constrained spectral clustering, which can guarantee that all given constraints are fulfilled. While we present arguments that in practice
it is the best choice to satisfy all constraints even if the data is noisy,  in the case of inconsistent
or unreliable constraints one should refrain from doing so. Thus our second contribution is to show that our framework
can be extended to handle degree-of-belief and even inconsistent constraints. In this case COSC optimizes a trade-off between
having small normalized cut and a small number of violated constraints. We present an efficient implementation of COSC based
on an optimization technique proposed in \cite{HeiSet2011} which scales to large datasets. While the continuous optimization
problem is non-convex and thus convergence to the global optimum is not guaranteed, we can show that our method improves
any given partition which satisfies all constraints or it stops after one iteration.

All omitted proofs and additional experimental results can be found in the supplementary material.

{\bf Notation.}
Set functions are denoted by a hat, $\hat{S}$, while the corresponding extension 
is $S$. 
In this paper, we consider the normalized cut problem with general vertex weights.  
Formally, let $G(V, E, w, b)$ be an undirected graph $G$ with vertex set $V$ and edge set $E$ together
 with edge weights $w: V \times V \rightarrow \R_+$ 
and vertex weights $b: V \rightarrow \R_+$ and $n=|V|$.
Let $C \subset V$ and denote by $\overline{C} = V \backslash C$. 
We define respectively the cut, 
the generalized volume and the normalized cut (with general vertex weights) of a partition $(C,\overline{C})$ as
{\small
\begin{align*}
   \cut(C,\overline{C}) = 2\sum_{i \in C,\, j \in \overline{C}} w_{ij},&&
   \gvol(C)  = \sum_{i \in C} b_i,\\
   \bal(C)  = 2\frac{\gvol(C) \gvol(\overline{C})}{\gvol(V)} ,&&
      \Ncut(C,\overline{C}) = 
                                             \frac{\cut(C,\overline{C})}{\bal(C)}.
\end{align*}}
We obtain ratio cut and normalized cut for special cases of the vertex weights,  $b_i = 1, 
 \ \textrm{and} \ b_i = d_i, \textrm{where} \ d_i = \sum_{j=1}^n w_{ij}$, respectively. In the ratio cut case, 
 $\gvol(C)$ is the cardinality of $C$ and in the normalized cut case, it is volume of $C$, denoted by $\vol(C)$.

\section{The Constrained Normalized Cut Problem}\label{sec:CLCon}
 We consider the normalized cut problem with must-link and cannot-link constraints. 
 Let $G(V, E, w, b)$ denote the given graph and $Q^m$, $Q^c$ be the constraint matrices, 
  where the element $q^m_{ij}$ (or $q^c_{ij}$) $\in \{0,1\}$ 
 specifies the must-link (or cannot-link) constraint between $i$ and $j$. We will in the following always assume that $G$ is connected. 
All what is stated below and our suggested algorithm can be easily generalized to \emph{degree of belief constraints} by allowing $q^{m}_{ij}$ (and $q^{c}_{ij}$)  $\in [0,1]$.
However, in the following we consider only $q_{ij}$ (and $q^{c}_{ij}$)  $\in \{0,1\}$,  in order to keep the theoretical statements more accessible.
\begin{definition}
  We call a partition $(C, \overline{C})$ \textbf{consistent} if it satisfies all constraints in $Q^m$  and $Q^c$.
\end{definition}
  Then the \textbf{constrained normalized cut problem} is to minimize $\Bcut(C, \overline{C})$ over all consistent
partitions. If the constraints are unreliable or inconsistent one can relax this problem and optimize 
a trade-off between normalized cut and the number of violated constraints.
 In this paper, we  address both problems in a common framework. 

 
We define the set functions, $M,N:2^V \rightarrow \R$, as
   \begin{align*}
   \hat{M}(C) &:= 
                           2 \sum_{i \in C,\, j \in \overline{C}} \hspace{-3mm}q^m_{ij}  \\
 \hat{N}(C) &:= 
                            \sum_{i \in C,\, j \in C} \hspace{-3mm} q^c_{ij}  +   \sum_{i \in \overline{C},\, j \in \overline{C}}  \hspace{-3mm}q^c_{ij}
                        = \vol(Q^c)  - 2 \sum_{i \in C,\, j \in \overline{C}}  \hspace{-3mm} q^c_{ij}.
 \end{align*}
$\hat{M}(C)$ and $\hat{N}(C)$  are equal to twice the number of 
 violated must-link and cannot-link constraints of partition $(C,\overline{C})$. 

As we show in the following, both the constrained normalized cut problem and its
soft version can be addressed by minimization of $\hat{F}_{\gamma} : 2^{V} \rightarrow \R$ defined as  
\begin{align}\label{set_func}
 \hat{F}_\gamma (C) 
=\frac{ \cut(C, \overline{C}) + \gamma \,\big(\hat{M}(C)
  +  \hat{N}(C)\big)}
{\bal(C)} ,
\end{align}
where $\gamma \in \R_+$. Note that $\hat{F}_\gamma(C)=\Ncut(C,\overline{C})$ if $(C,\overline{C})$ is consistent.
Thus the minimization of $\hat{F}_\gamma(C)$ corresponds to a trade-off between having small normalized
cut and satisfying all constraints parameterized by $\gamma$.

The relation between the parameter $\gamma$ and the number of violated constraints by the 
partition minimizing $\hat{F}_\gamma$ is quantified in the following lemma.
\begin{lemma} \label{le:gamma_bound}
 Let $(C,\overline{C})$ be consistent 
and $\lambda = \Ncut(C, \overline{C})$.
 If $\gamma \geq \frac{\gvol(V)}{4(l+1)} \lambda$, then
 any minimizer of $\hat{F_\gamma}$ violates no more than $l$ constraints.
\end{lemma}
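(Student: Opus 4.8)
The plan is to argue by contradiction. Let $(A,\overline A)$ be any minimizer of $\hat F_\gamma$ and suppose it violates at least $l+1$ constraints; I will show its objective value strictly exceeds $\lambda$, contradicting minimality, since the consistent partition $(C,\overline C)$ certifies that the minimum of $\hat F_\gamma$ is at most $\lambda$.

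First I would record the two elementary ingredients. (i) Since $(C,\overline C)$ is consistent, $\hat M(C)=\hat N(C)=0$, so $\hat F_\gamma(C)=\Ncut(C,\overline C)=\lambda$; hence $\hat F_\gamma(A)\le\lambda<\infty$, which in particular forces $\bal(A)>0$, i.e.\ $(A,\overline A)$ is a genuine nontrivial partition (the minimum over $2^V$ minus the two trivial sets is attained because this set is finite). (ii) An AM--GM bound on the balance: from $\gvol(V)=\gvol(A)+\gvol(\overline A)$ we get $\gvol(A)\gvol(\overline A)\le\tfrac14\gvol(V)^2$, hence $\bal(A)=2\gvol(A)\gvol(\overline A)/\gvol(V)\le\tfrac12\gvol(V)$.

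Now, because $\hat M(A)+\hat N(A)$ equals twice the number of violated constraints, the assumption that $(A,\overline C)$ violates $k\ge l+1$ constraints gives $\hat M(A)+\hat N(A)=2k\ge 2(l+1)$. Combining this with (ii) and with $\cut(A,\overline A)>0$ — which holds since $G$ is connected and $(A,\overline A)$ is nontrivial — yields
\[
\hat F_\gamma(A)=\frac{\cut(A,\overline A)+\gamma\,(\hat M(A)+\hat N(A))}{\bal(A)}>\frac{2(l+1)\gamma}{\tfrac12\gvol(V)}=\frac{4(l+1)\gamma}{\gvol(V)}\ge\lambda ,
\]
where the final inequality is exactly the hypothesis $\gamma\ge\frac{\gvol(V)}{4(l+1)}\lambda$. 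This contradicts (i), so every minimizer of $\hat F_\gamma$ violates at most $l$ constraints.

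The only delicate point is the strictness of the middle inequality: a contradiction requires $\hat F_\gamma(A)>\lambda$, not merely $\ge\lambda$, and this is precisely where connectedness of $G$ (guaranteeing $\cut(A,\overline A)>0$ for any nontrivial split) enters. The rest is bookkeeping — verifying that $\hat M+\hat N$ counts violated constraints with multiplicity two, and that a (nontrivial) minimizer exists since the admissible set is finite and $\hat F_\gamma$ takes the finite value $\lambda$ on it.
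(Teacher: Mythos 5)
Your proof is correct and follows essentially the same route as the paper's: bound $\bal(A)\le\gvol(V)/2$, use that a violating partition has $\hat M+\hat N\ge 2(l+1)$, and invoke connectedness to get the strict inequality $\hat F_\gamma(A)>4(l+1)\gamma/\gvol(V)\ge\lambda$, contradicting minimality. The only differences are cosmetic (you argue strictness via $\cut(A,\overline A)>0$ where the paper uses $\Ncut(A,\overline A)>0$, and you spell out the existence and non-triviality of the minimizer, which the paper leaves implicit).
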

 \begin{proof}
Any partition $(C,\overline{C})$ that violates more than $l$ constraints satisfies
$\hat{M}(C)+\hat{N}(C)\geq 2(l+1)$ and thus
{\small\begin{align*}
   \hat{F}_\gamma(C) &=   \Ncut(C, \overline{C})+ \gamma \ \frac{ \hat{M}(C) + \hat{N}(C) } {\bal(C)}\\
     & \ge \Ncut(C, \overline{C}) +  \frac{ \gamma 2(l+1)}{\frac{1}{2}\gvol(V)}
       > \frac{4 \gamma (l+1)}{\gvol(V)},
\end{align*}}where we have used that $\max_C \bal(C)=\gvol(V)/2$ and $\Ncut(C,\overline{C})>0$ as the graph
is connected. Assume now that  the partition $(D,\overline{D})$ minimizes $\hat{F}_\gamma$ for 
  $\gamma=\frac{\gvol(V)}{4(l+1)}\lambda$ and  violates more than $l$ constraints. Then
{\small\begin{align*}
   \hat{F}_{\gamma}(D) &> \frac{4 \gamma (l+1)}{\gvol(V)} \geq \lambda = \hat{F_\gamma}(C),
\end{align*}}
which leads to a contradiction.
 \end{proof}

Note that it is easy to construct a partition which is consistent and thus the above choice of $\gamma$
is constructive.
The following theorem is immediate from the above lemma for the special case $l=0$. 
\begin{theorem}\label{th:main_set}
 Let $(C,\overline{C})$ be consistent with the given constraints 
 and $\lambda = \Ncut(C, \overline{C})$.
Then for  $\gamma \geq \frac{\gvol(V)}{4} \ \lambda$, it holds that 
\begin{align*}
\argmin_{\substack{C \subset V : \\ (C, \overline{C}) \ \textrm{consistent}}} \Bcut(C, \overline{C}) 
		= \argmin_{C \subset V} \hat{F}_\gamma(C)
\end{align*}
and the optimum values of both problems are equal. 
\end{theorem}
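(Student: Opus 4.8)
The plan is to obtain the theorem as the $l=0$ specialization of Lemma~\ref{le:gamma_bound}, combined with the identity $\hat F_\gamma(C)=\Ncut(C,\overline C)$ valid for every consistent partition (observed just after the definition of $\hat F_\gamma$). So there is essentially nothing new to prove beyond Lemma~\ref{le:gamma_bound}; the work is just bookkeeping the $\argmin$ sets.

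First I would record that both minimization problems are attained: $2^V$ is finite, the family of consistent partitions is nonempty because $(C,\overline C)$ is assumed consistent, and since $G$ is connected every partition into two nonempty parts has $\bal>0$, so $\Ncut$ is finite there. Hence $m_{\mathrm{c}}:=\min\{\Ncut(C,\overline C):(C,\overline C)\text{ consistent}\}$ and $m:=\min_{C\subset V}\hat F_\gamma(C)$ both exist. Applying Lemma~\ref{le:gamma_bound} with $l=0$ and the hypothesis $\gamma\ge\frac{\gvol(V)}{4}\lambda=\frac{\gvol(V)}{4(0+1)}\lambda$ shows that every minimizer $D$ of $\hat F_\gamma$ is consistent, and therefore $\hat F_\gamma(D)=\Ncut(D,\overline D)$; in particular $m=\hat F_\gamma(D)=\Ncut(D,\overline D)\ge m_{\mathrm{c}}$.

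For the reverse inequality I would pick any $C^\star$ minimizing $\Ncut$ among consistent partitions; then, using $\hat F_\gamma=\Ncut$ on consistent partitions, $m_{\mathrm{c}}=\Ncut(C^\star,\overline{C^\star})=\hat F_\gamma(C^\star)\ge m$. Hence $m=m_{\mathrm{c}}$, which is the asserted equality of optimum values. The equality of the two $\argmin$ sets then follows by a two-way inclusion: any minimizer $D$ of $\hat F_\gamma$ is consistent and satisfies $\Ncut(D,\overline D)=\hat F_\gamma(D)=m=m_{\mathrm{c}}$, so $D$ lies in the constrained $\argmin$; conversely any consistent $\Ncut$-minimizer $C^\star$ satisfies $\hat F_\gamma(C^\star)=\Ncut(C^\star,\overline{C^\star})=m_{\mathrm{c}}=m$, so $C^\star$ lies in the unconstrained $\argmin$ of $\hat F_\gamma$.

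The proof has no genuinely hard step, since Lemma~\ref{le:gamma_bound} already does the real work by forcing the global minimizer of the unconstrained surrogate $\hat F_\gamma$ to be consistent. The only points requiring a little care are (i) working with the $\argmin$ sets rather than merely the optimal values, and (ii) verifying the trivial well-posedness facts — finiteness of $2^V$, existence of at least one consistent partition, and positivity of $\bal$ from connectedness of $G$ — so that all the minima invoked actually exist.
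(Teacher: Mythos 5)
Your proposal is correct and follows exactly the paper's route: the paper states that Theorem~\ref{th:main_set} is immediate from Lemma~\ref{le:gamma_bound} with $l=0$, combined with the fact that $\hat F_\gamma(C)=\Ncut(C,\overline C)$ on consistent partitions. You simply make explicit the bookkeeping (existence of the minima and the two-way inclusion of the $\argmin$ sets) that the paper leaves implicit.
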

Thus the constrained normalized cut problem can be equivalently formulated as the 
combinatorial problem of minimizing $\hat{F}_\gamma$. In the next section we will show that this
problem allows for a tight relaxation into a continuous optimization problem.

\subsection{A tight continuous relaxation of $\hat{F}_\gamma$}
 Minimizing $\hat{F}_\gamma$  is a hard combinatorial problem. 
 In the following, we derive an equivalent continuous optimization problem. 
 Let  $f : \R^V \rightarrow \R$ denote a function on $V$, and $\ones_C$ denote the vector that is 1 on $C$ and 0 elsewhere.
  Define
  \begin{align*}
   M(f) &:= \sum_{i,j=1}^n q^m_{ij} \abs{f_i - f_j} \textrm{and} \\
 N(f) &:= \vol(Q^c)\big(\max(f) - \min(f)\big) - \hspace{-1mm} \sum_{i,j=1}^n q^c_{ij} \abs{f_i - f_j},
 \end{align*}
where $\max(f)$ and $\min(f)$ are respectively the maximum and minimum elements of $f$. 
Note that $M(\ones_C)=\hat{M}(C)$ and $N(\ones_C)=\hat{N}(C)$ for any non-trivial\footnote{
A partition $(C, \overline{C})$ is non-trivial if neither $C=\emptyset$ nor ${C}=V$.} partition $(C,\overline{C})$. 

Let $B$ denote the diagonal matrix with the vertex weights $b$ on the diagonal. We define 

\begin{align*}
F_\gamma (f) 
 = \frac{ \sum_{i,j = 1}^n w_{ij} \abs{f_i - f_j} + \gamma \ M(f)
  + \gamma \ 
  N(f)}
{{\norm{B (f - \frac{1}{\gvol(V)} \inner{f,b} \ones )}_1}}.
\end{align*}
We denote the numerator of $F_\gamma(f)$ by $R_\gamma(f)$ and the denominator
by $S(f)$.
\begin{lemma}\label{le:FGammaOnIndicators}
For any non-trivial partition it holds that $\hat{F}_\gamma(C) = F_\gamma(\ones_C)$.
\end{lemma}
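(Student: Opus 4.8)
The plan is to verify the claimed identity by evaluating the numerator $R_\gamma$ and the denominator $S$ of $F_\gamma$ separately at the indicator vector $\ones_C$ and matching each against the corresponding piece of $\hat{F}_\gamma(C)$. Nothing more than direct substitution is needed, so the proof is essentially a bookkeeping exercise; the only subtlety is where non-triviality of $(C,\overline{C})$ enters.

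First I would handle the numerator $R_\gamma(\ones_C)$. Since $\ones_C$ takes only the values $0$ and $1$, the quantity $\abs{(\ones_C)_i-(\ones_C)_j}$ equals $1$ exactly when precisely one of $i,j$ lies in $C$, and $0$ otherwise; summing $w_{ij}$ against this over all ordered pairs gives $2\sum_{i\in C,\,j\in\overline{C}} w_{ij} = \cut(C,\overline{C})$. The other two numerator terms are already recorded in the text just before the lemma: $M(\ones_C) = \hat{M}(C)$ and $N(\ones_C) = \hat{N}(C)$ for any non-trivial partition — this is exactly the point at which non-triviality is used, since it forces $\max(\ones_C)=1$ and $\min(\ones_C)=0$, so that $N(\ones_C) = \vol(Q^c) - 2\sum_{i\in C,\,j\in\overline{C}} q^c_{ij} = \hat{N}(C)$. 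Hence $R_\gamma(\ones_C) = \cut(C,\overline{C}) + \gamma\big(\hat{M}(C) + \hat{N}(C)\big)$, which is precisely the numerator of $\hat{F}_\gamma(C)$.

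Next I would compute $S(\ones_C) = \norm{B\big(\ones_C - \tfrac{1}{\gvol(V)}\inner{\ones_C,b}\ones\big)}_1$. One has $\inner{\ones_C,b} = \sum_{i\in C} b_i = \gvol(C)$, so the vector inside the norm equals $\gvol(\overline{C})/\gvol(V)$ on $C$ and $-\gvol(C)/\gvol(V)$ on $\overline{C}$ (using $\gvol(V)=\gvol(C)+\gvol(\overline{C})$). Multiplying coordinatewise by the nonnegative weights $b_i$ and summing absolute values gives $\frac{\gvol(\overline{C})}{\gvol(V)}\sum_{i\in C} b_i + \frac{\gvol(C)}{\gvol(V)}\sum_{i\in\overline{C}} b_i = \frac{2\gvol(C)\gvol(\overline{C})}{\gvol(V)} = \bal(C)$. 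Dividing the numerator identity by the denominator identity yields $F_\gamma(\ones_C) = \hat{F}_\gamma(C)$.

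I do not expect a genuine obstacle here. The one place to be careful is the role of non-triviality, which is needed twice: once so that $\max(\ones_C)-\min(\ones_C)=1$ (making the $N$ term reduce correctly), and once so that $\bal(C)>0$, guaranteeing the denominator is nonzero and the ratio well defined.
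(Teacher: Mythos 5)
Your proof is correct and follows essentially the same route as the paper's: direct substitution of $\ones_C$ into numerator and denominator, with the numerator reducing to $\cut(C,\overline{C})+\gamma(\hat M(C)+\hat N(C))$ via the identities $M(\ones_C)=\hat M(C)$, $N(\ones_C)=\hat N(C)$, and the denominator evaluating to $\bal(C)$ by the same coordinatewise computation. Your explicit remarks on where non-triviality enters (forcing $\max(\ones_C)-\min(\ones_C)=1$ and $\bal(C)>0$) are a welcome clarification of what the paper leaves implicit.
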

\begin{proof}
We have,
\begin{align*} 
   &\sum_{i,j=1}^n w_{ij} | (\ones_C)_i - (\ones_C)_j| =  \cut(C,\overline{C})\\
  &\norm{B (\ones_C - \frac{1}{\gvol(V)} \inner{\ones_C,b} \ones )}_1 \\
= &\norm{B\Big (\big(1-\frac{\gvol(C)}{\gvol(V)}\big)\ones_C - \frac{\gvol(C)}{\gvol(V)} \ones_{\overline{C}}\Big)}_1\\
= &\big(1-\frac{\gvol(C)}{\gvol(V)}\big)\gvol(C) + \frac{\gvol(C)}{\gvol(V)}\gvol(\overline{C}) = \bal(C)
\end{align*}
This together with the discussion on $M,N$ finishes the proof.
\end{proof}
From Lemma \ref{le:FGammaOnIndicators} it immediately follows that minimizing $F_\gamma$
is a relaxation of minimizing $\hat{F}_\gamma$. In our main result (Theorem \ref{th:equ}), we establish that both problems are
actually equivalent, so that we have a tight relaxation. In particular a minimizer of $F_\gamma$ is an indicator function corresponding 
to the optimal partition minimizing $\hat{F}_\gamma$. 

The proof is based on the following key property of the functional $F_{\gamma}$. 
 Given any non-constant $f \in \R^n$, optimal thresholding,  
 \[ C^{*}_{f} = \argmin_{\min_i f_i\ \leq\ t\ <\ \max_i f_i} \hat{F}_\gamma(C^t_{f}),\] 
 where  $C^t_{f} = \{ i \in V | f_i > t\}$,  yields an indicator function on some $C^{*}_{f} \subset V$ with 
 smaller or equal value of $F_\gamma$. 

 \begin{theorem}\label{th:equ}
 For $\gamma \ge 0$, we have 
 \[
  \min_{C \subset V} \hat{F}_\gamma(C) \ = \   \min_{f \in \R^n,\ f \textrm{ non-constant}} F_\gamma(f). \]
  Moreover, a solution of the first problem can be obtained from the solution of the second problem. 
 \end{theorem}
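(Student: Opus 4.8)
The plan is to prove the two inequalities separately. The direction $\min_{f} F_\gamma(f) \le \min_{C} \hat{F}_\gamma(C)$ is essentially immediate: since $\bal(\emptyset) = \bal(V) = 0$, the set function $\hat{F}_\gamma$ is finite only on non-trivial partitions, and for every such $C$ Lemma~\ref{le:FGammaOnIndicators} gives $F_\gamma(\ones_C) = \hat{F}_\gamma(C)$ with $\ones_C$ non-constant, so the continuous minimum cannot exceed the combinatorial one. The substance of the theorem is the reverse inequality, and I would obtain it from the \emph{optimal thresholding property} stated just before the theorem: for every non-constant $f \in \R^n$ there exists a level $t^\ast$ with $\hat{F}_\gamma(C^{t^\ast}_f) \le F_\gamma(f)$, where $C^t_f = \{i \in V : f_i > t\}$. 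Granting this and minimizing over non-constant $f$ gives $\min_C \hat{F}_\gamma(C) \le \min_f F_\gamma(f)$; since the right-hand side is attained at an indicator vector, both sides are genuine minima and are equal, and a minimizer of $\hat{F}_\gamma$ is recovered as a level set $C^{t^\ast}_f$ of a minimizer of $F_\gamma$, which also settles the ``moreover''.

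To prove the thresholding property I would handle the numerator $R_\gamma$ and the denominator $S$ of $F_\gamma$ by a layer-cake (co-area) argument. For the numerator, applying the elementary identity $|f_i - f_j| = \int_{\min f}^{\max f} \big| (\ones_{C^t_f})_i - (\ones_{C^t_f})_j \big| \, dt$ to each of the three pieces $\sum_{ij} w_{ij}|f_i - f_j|$, $M(f)$, and --- after writing $\max(f) - \min(f) = \int_{\min f}^{\max f} dt$ --- $N(f)$, and using that $\cut$, $\hat{M}$, $\hat{N}$ are all ``cut-type'' set functions, yields
\[ R_\gamma(f) = \int_{\min f}^{\max f} \hat{R}_\gamma\big(C^t_f\big)\, dt, \qquad \hat{R}_\gamma(C) := \cut(C,\overline{C}) + \gamma\big(\hat{M}(C) + \hat{N}(C)\big), \]
so that $R_\gamma$ is exactly the co-area integral of the numerator of $\hat{F}_\gamma$. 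For the denominator, the key point is that $S$ is \emph{sublinear} and vanishes on constants: $S(\lambda f) = \lambda S(f)$ for $\lambda \ge 0$, $S(f+g) \le S(f) + S(g)$, and $S(c\,\ones) = 0$, all because $\mu_f := \inner{f,b}/\gvol(V)$ is linear in $f$ and $\norm{\cdot}_1$ is a norm. Combining this with the pointwise layer-cake representation $f = \min(f)\,\ones + \int_{\min f}^{\max f} \ones_{C^t_f}\, dt$ and the identity $S(\ones_C) = \bal(C)$ established inside the proof of Lemma~\ref{le:FGammaOnIndicators} gives
\[ S(f) \ \le\ \int_{\min f}^{\max f} S\big(\ones_{C^t_f}\big)\, dt \ =\ \int_{\min f}^{\max f} \bal\big(C^t_f\big)\, dt . \]

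Finally I would combine the two: $F_\gamma(f) = R_\gamma(f)/S(f) \ge \big(\int_{\min f}^{\max f} \hat{R}_\gamma(C^t_f)\, dt\big) / \big(\int_{\min f}^{\max f} \bal(C^t_f)\, dt\big)$, and since $f$ is non-constant the level set $C^t_f$ is non-trivial, hence $\bal(C^t_f) > 0$ for every $t \in [\min f, \max f)$, so the denominator integral is strictly positive. A ratio of integrals of nonnegative integrands is bounded below by the infimum of the ratio of the integrands --- which here, by piecewise constancy in $t$, is a minimum over finitely many threshold sets --- so there is $t^\ast \in [\min f, \max f)$ with $\hat{F}_\gamma(C^{t^\ast}_f) = \hat{R}_\gamma(C^{t^\ast}_f)/\bal(C^{t^\ast}_f) \le F_\gamma(f)$, which is the thresholding property. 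I expect the denominator estimate to be the main obstacle and the only genuinely non-routine step: one must resist trying to prove an equality $S(f) = \int_{\min f}^{\max f}\bal(C^t_f)\, dt$ --- this is false in general, as a three-point example shows --- and instead isolate exactly the structural properties of $S$ (positive homogeneity, subadditivity, nullity on constants, and the value $\bal(C)$ on indicators) that make the sublinearity-plus-layer-cake bound go through; it is precisely because only an inequality is needed that one may use the simple $\ell_1$-type denominator $S$ in place of a Lov\'asz extension of $\bal$. The numerator co-area identities and the mediant inequality are routine.
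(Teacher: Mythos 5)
Your proof is correct, and its overall architecture matches the paper's: the easy inequality via Lemma~\ref{le:FGammaOnIndicators}, the co-area identity $R_\gamma(f) = \int \hat{R}_\gamma(C^t_f)\,dt$ for the numerator, and the optimal-thresholding descent property $F_\gamma(f) \ge \inf_t \hat{F}_\gamma(C^t_f)$ as the crux. Where you diverge is in how the denominator enters. The paper represents $S$ as a support function, $S(f)=\sup_{u\in U}\inner{u,f}$ with $\inner{u,\ones}=0$ for all $u\in U$, inserts $\inner{u,\ones_{C^t_f}}\le \hat{S}(C^t_f)$ into the co-area integral of $R_\gamma$, pulls $\inf_t \hat{R}_\gamma(C^t_f)/\hat{S}(C^t_f)$ out of the integral, evaluates $\int_{\min_i f_i}^{\max_i f_i}\inner{u,\ones_{C^t_f}}\,dt=\inner{u,f}$ by Abel summation, and only then takes the supremum over $u$ to recover $S(f)$ on the right-hand side. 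You instead apply sublinearity of $S$ (positive one-homogeneity plus subadditivity, with $S(c\,\ones)=0$) directly to the layer-cake decomposition $f=\min(f)\,\ones+\int\ones_{C^t_f}\,dt$, obtaining $S(f)\le\int_{\min f}^{\max f}\bal(C^t_f)\,dt$, and finish with the mediant inequality for ratios of integrals of nonnegative integrands. The two arguments exploit exactly the same structural facts about $S$ and are essentially dual to one another, so this is a repackaging rather than a new idea; but your packaging is the more elementary one, avoids introducing the set $U$ altogether, and sidesteps the mildly delicate step in the paper of extracting $\inf_t$ from an integral whose integrand $\inner{u,\ones_{C^t_f}}$ need not be sign-definite. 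Your emphasis that only the inequality $S(f)\le\int\bal(C^t_f)\,dt$ is needed (equality fails, as your three-point example correctly shows, e.g.\ $b\equiv 1$, $f=(0,1,2)$ gives $S(f)=2<8/3$) is exactly the right point, and your handling of attainment and of $\bal(C^t_f)>0$ on $[\min f,\max f)$ is sound.
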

\begin{proof}
It has been shown in \cite{HeiBue2010}, that
{\small 
\begin{align*}
\sum_{i,j=1}^n w_{ij}|f_i-f_j|   
 &= \ \int_{-\infty}^{\infty} \cut(C^t_f, \overline{C^t_f}) dt
\end{align*}
}
We define $\hat{P}:2^V \rightarrow \R$ as $\hat{P}(C) = 1$, if  $C\neq V$ and $C\neq \emptyset$, and $0$ otherwise. 
 Denoting by $\cut_{Q}(C, \overline{C})$, the cut on the constraint graph whose weight matrix is given by $Q$, we have
{\small 
\begin{align*}
R_\gamma(f) &= \ \int_{-\infty}^{\infty}  \cut(C^t_f, \overline{C^t_f}) dt  +\gamma\ \int_{-\infty}^{\infty}  \cut_{Q^{m}}(C^t_f, \overline{C^t_f}) \\  
&+ \gamma \vol(Q^{c}) \int_{\min_i f_i}^{\max_i f_i}1 dt
-  \gamma\ \int_{-\infty}^{\infty}  \cut_{Q^{c}}(C^t_f, \overline{C^t_f}) \\ 
 &=  \ \int_{-\infty}^{\infty}  \cut(C^t_f, \overline{C^t_f}) dt +\gamma\ \int_{-\infty}^{\infty}  \cut_{Q^{m}}(C^t_f, \overline{C^t_f}) \\  
& +\gamma \vol(Q^{c}) \int_{-\infty}^{\infty} \hat{P}(C^t_f) dt  - \gamma\ \int_{-\infty}^{\infty}  \cut_Q^{c}(C^t_f, \overline{C^t_f}) \\
 &= \int_{-\infty}^{\infty} \hat{R}_\gamma(C^t_f)\, dt
\end{align*}
}
Note that $S(f)$ is an even, convex and positively one-homogeneous function.\footnote{A function $S:\R^V \rightarrow \R$ 
is positively one-homogeneous if $S(\alpha f)=\alpha \,S(f)$ for all $\alpha >0$.}
Moreover, every even, convex positively one-homogeneous function, $T: \R^V \rightarrow \R$ has the form 
{\small 
$T(f) = \sup_{u \in U} \inner{u,f}$,
}where $U$ is a symmetric convex set, see e.g., \cite{HirLem2001}. Note that $S(\ones)=0$ and thus because of the symmetry of $U$ it has
to hold $\inner{u,\ones}=0$ for all $u \in U$. 
Since $S(\ones_{C^{t}_{f}}) = \hat{S}(C^{t}_{f})$ and 
$\innerproduct{u}{f} \le S(f), u \in U$, we have for all $u \in U$, 
{\small 
\begin{align}\label{lb_R_gamma}
R_\gamma(f) &\ge \int_{-\infty}^{\infty} \frac{\hat{R}_\gamma(C^t_f)}{\hat{S}(C^t_f)}  \innerproduct{u} {\ones_{C^t_f}} dt \nonumber\\
& \ge \inf_{t \in \R} \frac{\hat{R}_\gamma(C^t_f)}{\hat{S}(C^t_f)} \int_{\min_i f_i}^{\max_i f_i}  \innerproduct{u} {\ones_{C^{t}_{f}}} dt,
\end{align}
}where in the last inequality we changed the limits of integration using the fact that $\inner{u, \ones} = 0$.
Let $C_i:= C^t_{f_i}$ and $C_0 = V$. Then
{\small
\begin{align*}
\int_{\min_i f_i}^{\max_i f_i} \inner{u, \ones_{C_i}} dt = 
\sum_{i=1}^{n-1} \inner{u, \ones_{C_i}} (f_{i+1} - f_i) = \\
\sum_{i=1}^{n} f_i \big(\inner{u, \ones_{C_{i-1}}} - \inner{u, \ones_{C_i}} \big) = \sum_{i=1}^{n} f_i u_i =  \innerproduct{u}{f}
\end{align*}
}Noting that (\ref{lb_R_gamma}) holds for all $u \in U$, we have 
{\small
\begin{align*}
R_\gamma(f)
\ge \inf_{t \in \R} \hat{F}_\gamma(C^t_f)\ \sup_{u \in U} \innerproduct{u}{f} 
 = \inf_{t \in \R} \hat{F}_\gamma(C^t_f)\ S(f).
\end{align*}
}This implies that 
{\small
\begin{align}\label{thresholding_better_result}
F_\gamma(f) 
\ge \inf_{t \in \R} \hat{F}_\gamma(C^t_f) 
= F_\gamma(\ones_{C^*_{f}}),
\end{align}
}
where $C^*_{f} = \argmin_{\min_{i} f_{i}\ \le\ t\ <\ \max_{i} f_{i}} \hat{F}_\gamma(C^t_{f})$. 

 This shows that we always get descent by optimal thresholding. 
 Thus the actual minimizer of $F_\gamma$ is a two-valued function, 
 which  can be transformed to an indicator function on some $C \subset V$,
 because of the scale and shift invariance of $F_{\gamma}$.
 Then from Lemma \ref{le:FGammaOnIndicators}, which shows that 
 for non-trivial partitions, $\hat{F}_{\gamma}(C) = F_{\gamma}(\ones_{C})$, 
  the statement follows. 
\end{proof}

 Now, we state our second result: the problem of minimizing the functional $F_\gamma$ 
 over arbitrary real-valued non-constant $f$, for a particular choice of $\gamma$, is in fact equivalent
 to the NP-hard problem of minimizing normalized cut with constraints. 

\begin{theorem}\label{th:main}
Let $(C, \overline{C})$ be consistent and $\lambda = \Bcut(C^\prime, \overline{C^\prime})$. 
Then for  $\gamma \ge \frac{\gvol(V)}{4} \ \lambda$, it holds that 
\begin{align*}
\min_{\substack{C \subset V : \\ (C, \overline{C}) \ \textrm{consistent}}} \Bcut(C, \overline{C}) 
=\min_{f \in \R^n,\ f \textrm{ non-constant}} F_\gamma(f) 
\end{align*}
Furthermore, an optimal partition of the constrained problem can be obtained from a minimizer of the 
right problem. 
 \begin{proof}
  From Theorem \ref{th:main_set} we know that, for the chosen value of $\gamma$, the 
  constrained problem is equivalent to 
  \[ \min_{C \subset V} \hat{F}_{\gamma}(C), \]
   which in turn is equivalent, by Theorem \ref{th:equ}, to the right problem in the statement. 
   Moreover, as shown in Theorem \ref{th:equ}, minimizer of $F_{\gamma}$ is an indicator 
   function on $C \subset V$ and hence we immediately get an optimal partition of the constrained problem. 
 \end{proof}


\end{theorem}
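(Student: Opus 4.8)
The plan is to obtain this statement as an immediate corollary by chaining together the two results already established, Theorem \ref{th:main_set} and Theorem \ref{th:equ}; essentially no new argument is required beyond careful bookkeeping of which object each theorem produces.

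First I would invoke Theorem \ref{th:main_set}: since $\gamma \ge \frac{\gvol(V)}{4}\lambda$ with $\lambda$ the normalized cut of some consistent partition, that theorem says the constrained combinatorial problem $\min_{(C,\overline{C})\ \textrm{consistent}} \Bcut(C,\overline{C})$ has the same optimal value as the unconstrained combinatorial problem $\min_{C \subset V} \hat{F}_\gamma(C)$, and moreover any minimizer of the latter corresponds to a consistent partition attaining the former. Next I would apply Theorem \ref{th:equ} with the same $\gamma$ (which only needs $\gamma \ge 0$): it gives $\min_{C \subset V} \hat{F}_\gamma(C) = \min_{f\ \textrm{non-constant}} F_\gamma(f)$. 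Composing the two equalities yields the displayed identity.

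For the ``furthermore'' claim about recovering an optimal partition, I would trace the construction through both theorems. Given a minimizer $f^\star$ of $F_\gamma$, the optimal-thresholding argument inside the proof of Theorem \ref{th:equ} (inequality \eqref{thresholding_better_result}) shows $F_\gamma(f^\star) = F_\gamma(\ones_{C^\star_{f^\star}})$, so by the scale and shift invariance of $F_\gamma$ we may assume $f^\star = \ones_C$ for some non-trivial $C \subset V$; then $\hat{F}_\gamma(C) = F_\gamma(\ones_C)$ by Lemma \ref{le:FGammaOnIndicators} together with the equality of the two minima forces $C$ to minimize $\hat{F}_\gamma$. Finally, Theorem \ref{th:main_set} guarantees that $(C,\overline{C})$ is then consistent and is an optimal solution of the constrained normalized cut problem.

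The only point that warrants any care — and the closest thing to an obstacle, though it is mild — is ensuring that the continuous problem genuinely attains its minimum rather than merely an infimum, so that writing ``$\min$'' on the right-hand side is legitimate. This again follows from the thresholding inequality of Theorem \ref{th:equ} combined with the finiteness of the set of non-trivial subsets of $V$: the infimum over non-constant $f$ is achieved already on the finitely many indicator vectors. Everything else in the proof is direct substitution of the two cited theorems.
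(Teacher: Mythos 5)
Your proposal is correct and follows essentially the same route as the paper: compose Theorem \ref{th:main_set} with Theorem \ref{th:equ} and use the fact that a minimizer of $F_\gamma$ can be taken to be an indicator function to recover the optimal consistent partition. Your added remark that the continuous minimum is genuinely attained (via thresholding onto the finitely many indicator vectors) is a sensible bit of extra care but does not change the argument.
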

A few comments on the implications of  Theorem \ref{th:main}. 
First, it shows that the constrained normalized cut problem can be equivalently
solved by minimizing $F_\gamma(f)$ for the given value of $\gamma$. 
 The value of $\gamma$ depends on the normalized cut value of a partition consistent with given constraints. 
 Note that such a partition can be obtained in polynomial time by 2-coloring the constraint graph as long as the constraints are 
 consistent. 

\subsection{Integration of must-link constraints via sparsification} \label{sec:ml_sparse}
If the must-link constraints are reliable and therefore should be enforced, one can directly integrate them
by merging the corresponding vertices together with re-definition of edge and vertex weights. 
In this way ones derives a new reduced graph, where the value of the normalized cut of all partitions that 
satisfy the must-link constraints are preserved. 
  
The construction of a reduced graph is given below for a must-link constraint $(p,q)$. 
\begin{enumerate}
\item merge $p$ and $q$ into a single vertex $\tau$.
\item update the vertex weight of $\tau$ by $b_\tau = b_p + b_q$.
\item update the edges as follows: if $r$ is any vertex other than $p$ and $q$, then add an edge between $\tau$ 
and $r$ with weight $w(p,r) + w(q,r)$.
\end{enumerate}

Note that this construction leads to a graph with vertex weights even if the original graph had vertex weights equal to $1$.
If there are many must-links, one can efficiently integrate all of them together by first constructing the must-link constraint graph and merging 
each connected component in this way.
 
The following lemma shows that the above construction preserves all normalized cuts which respect the must-link constraints.
 We prove it for the simple case where we merge $p$ and $q$ and the proof can easily be extended to the general case
by induction. 
\begin{lemma}\label{le:PreservationSCUT}
Let $G^\prime(V^\prime, E^\prime, w^\prime, b^\prime)$ be the reduced graph of $G(V, E, w, b)$ obtained by merging vertices 
$p$ and $q$. 
If a partition $(C,\overline{C})$ does not separate $p$ and $q$, we have
$\Bcut_G(C,\overline{C}) = \Bcut_{G^\prime}(C^\prime, \overline{C^\prime})$.
\end{lemma}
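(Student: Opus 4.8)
The plan is to show that merging $p$ and $q$ changes neither the cut value nor the generalized volume (and hence the balance term) of any partition that keeps $p$ and $q$ on the same side, from which $\Bcut_G(C,\overline{C}) = \Bcut_{G'}(C', \overline{C'})$ follows immediately from the definition $\Bcut = \cut/\bal$. So the proof splits into two independent bookkeeping verifications.

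First I would fix notation for the correspondence between partitions. If $(C,\overline{C})$ is a partition of $V$ that does not separate $p$ and $q$, then both $p,q \in C$ or both $p,q \in \overline{C}$; WLOG say $p,q \in C$. Define $C' = (C \setminus \{p,q\}) \cup \{\tau\}$, a subset of $V' = (V \setminus \{p,q\}) \cup \{\tau\}$, and $\overline{C'} = V' \setminus C' = \overline{C}$. This is the natural bijection between consistent partitions of $G$ and partitions of $G'$.

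Next I would check the volume is preserved. Since $b'_\tau = b_p + b_q$ and $b'_r = b_r$ for all $r \neq \tau$, and $\tau \in C'$ with $p,q \in C$, we get $\gvol_{G'}(C') = \sum_{i \in C'} b'_i = (b_p + b_q) + \sum_{i \in C \setminus \{p,q\}} b_i = \gvol_G(C)$, and likewise $\gvol_{G'}(\overline{C'}) = \gvol_G(\overline{C})$ trivially since $\overline{C}$ contains neither $p$ nor $q$. Also $\gvol_{G'}(V') = \gvol_G(V)$. Hence $\bal_{G'}(C') = \bal_G(C)$.

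Then I would check the cut is preserved. The only edges that differ between $G$ and $G'$ are those incident to $p$ or $q$: for each $r \notin \{p,q\}$, the two edges $\{p,r\}$ and $\{q,r\}$ of $G$ with weights $w_{pr}, w_{qr}$ are replaced by a single edge $\{\tau, r\}$ of weight $w_{pr} + w_{qr}$, and the edge $\{p,q\}$ itself (if present) disappears. Now in the cut $\cut_G(C,\overline{C}) = 2\sum_{i \in C, j \in \overline{C}} w_{ij}$, the edge $\{p,q\}$ contributes nothing because $p,q$ are on the same side $C$. For any $r \in \overline{C}$, the contribution of the pair $\{p,q\}$ to the crossing edges is $w_{pr} + w_{qr}$, which equals the contribution of $\tau$ in $G'$ (where $\tau \in C'$, $r \in \overline{C'} = \overline{C}$); contributions of all other vertex pairs are literally unchanged. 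Therefore $\cut_{G'}(C',\overline{C'}) = \cut_G(C,\overline{C})$. Combining with the balance identity gives the claim. I do not expect any real obstacle here — everything is a direct matching of terms; the only care needed is to handle the symmetric case $p,q \in \overline{C}$ (identical by symmetry) and to note explicitly that the $\{p,q\}$ edge drops out precisely because the partition respects the must-link constraint. The stated extension to merging an entire connected component then follows by induction on the number of merges, since after each merge the reduced graph again satisfies the hypotheses of the lemma for the next must-link pair within the component.
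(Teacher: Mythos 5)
Your proposal is correct and follows essentially the same route as the paper's proof: identify the natural bijection $C' = (C\setminus\{p,q\})\cup\{\tau\}$, then verify by direct term-matching that both the cut and the generalized volumes (hence the balance term) are preserved, with the general case following by induction on successive merges. The only cosmetic difference is that you explicitly remark that the $\{p,q\}$ edge drops out of the cut because both endpoints lie on the same side, a point the paper leaves implicit in its summation bookkeeping.
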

\begin{proof}
Note that $\gvol(V')=\gvol(\tau) + \gvol(V \backslash \{p,q\}) = \gvol(p) +\gvol(q) + \gvol(V \backslash \{p,q\}) = \gvol(V)$. 
If $(C,\overline{C})$ does not separate $p$ and $q$, then we have either $\tau \subset C$ or $\tau \subset \overline{C}$.
W.l.o.g. assume that $\tau \subset C$.
The corresponding partition of $V'$ is then $C'= \tau \cup (C \ \{p,q\})$ and $\overline{C'} = \overline{C}$. We get 
{\small
\begin{align*}
  \cut(C',\overline{C'}) &= \sum_{i \in C', j \in \overline{C'}} w'_{ij} = \sum_{j \in \overline{C'}} w'_{\tau j} + \sum_{i \in C'\backslash \tau, j \in \overline{C'}} w'_{ij} \\
                         &= \sum_{k \in \{p,q\}, j \in \overline{C}} w_{kj} + \sum_{i \in C\backslash \{p,q\}, j \in \overline{C}} w_{ij} = \cut(C,\overline{C}).\\
  \gvol(C')               &= \sum_{i \in C'} b'_i = b'_\tau + \sum_{i \in C'\backslash \tau}\\ 
                         &= b_p + b_q + \sum_{i \in C \backslash \{p,q\}} b_i = \sum_{i \in C} b_i = \gvol(C).\\
  \gvol(\overline{C'})    &= \sum_{i \in \overline{C'}} b'_i = \sum_{i \in \overline{C}} b_i = \gvol(\overline{C}).
\end{align*}
}
Thus we have $\Bcut_G(C,\overline{C}) = \Bcut_{G^\prime}(C^\prime, \overline{C^\prime})$
\end{proof}

\emph{All} partitions of the reduced graph fulfill all must-link constraints and thus any relaxation of the \emph{unconstrained} normalized cut problem can now
be used. Moreover, this is not restricted to the cut criterion we are using but any other graph cut criterion based on cut and the volume of the 
subsets will be preserved in the reduction.


\section{Algorithm for Constrained $1$-Spectral Clustering}
In this section we discuss the efficient minimization of $F_\gamma$ based on recent ideas from unconstrained
$1$-spectral clustering \cite{HeiBue2010,HeiSet2011}. Note, that $F_\gamma$ is a non-negative ratio of a difference of convex (d.c) function 
and a convex function, both of which
 are positively one-homogeneous.
In recent work \cite{HeiBue2010,HeiSet2011}, a general scheme, shown in  Algorithm \ref{alg:ratio_dc} (where $\partial S(f)$ denotes the subdifferential of the convex function $S$ at $f$), is proposed for the 
minimization of a non-negative ratio of a d.c function and convex function both of which are positively one-homogeneous.

It is shown in \cite{HeiSet2011} that Algorithm \ref{alg:ratio_dc} generates a sequence $f^{k}$ such that either
$F_\gamma(f^{k+1}) < F_\gamma(f^k)$ or the sequence terminates. Moreover, the cluster points of $f^{k}$
correspond to critical points of $F_\gamma$. 
   The scheme is given in Algorithm \ref{alg:ratio_dc} for the problem 
  $ \min_{f \in \R^n} (R_1(f) - R_2(f)) / S(f)$, where
{\small
\begin{align*}
   R_1(f) &:= \frac{1}{2}  \sum_{i, j = 1}^n (w_{ij} + \gamma q^m_{ij}) |f_i - f_j| \\
   &+ \frac{\gamma}{2} \sum_{i, j=1}^n q^c_{ij} (\max(f) - \min(f))\\
   R_2(f) &:= \frac{1}{2} \sum_{i,j=1}^n q^c_{ij} |f_i - f_j|, \qquad \\ S(f) &:=  \frac{1}{2}  \norm{B (f - \frac{1}{\gvol(V)} \inner{f,b} \ones )}_1
\end{align*}
}
Note that $R_1,\,R_2$ are both convex functions and $F_\gamma(f)=(R_1(f)-R_2(f))/S(f)$.

\begin{algorithm}[htb]
   \caption{Minimization of a ratio $(R_1(f)-R_2(f))/S(f)$ where $R_1,R_2,S$ are convex and positively one-homogeneous}
   \label{alg:ratio_dc}
\begin{algorithmic}[1]
   \STATE {\bfseries Initialization:} $f^0 = \text{random}$ with $\norm{f^0} = 1$, $\lambda^0 =(R_1(f^0)-R_2(f^0))/S(f^0)$
   \REPEAT
   \STATE $f^{k+1} = \argmin_{\norm{f}_2 \leq 1} \left\{ R_1(f) - \inner{f, r_2} - \lambda^k \inner{f, s}    \right\}$ \\
   			\text{where\ } $r_2 \in \partial R_2(f^k)$, $s \in \partial S(f^k)$
   \STATE $\lambda^{k+1}= (R_1(f^{k+1}) - R_2(f^{k+1})/S(f^{k+1})$
	\UNTIL $\frac{\abs{\lambda^{k+1}-\lambda^k}}{\lambda^k}< \epsilon$
  \STATE {\bfseries Output:}  $\lambda^{k+1}$ and $f^{k+1}$.
\end{algorithmic}
\end{algorithm}
Moreover, it is shown in \cite{HeiSet2011}, that if one wants to minimize $(R_1(f)-R_2(f))/S(f)$ only over non-constant functions, one has to ensure
that $\inner{r_2,\ones}=\inner{s,\ones}=0$. Note, that
\begin{align*}
\partial S(f)   & =  \frac{1}{2}(\Ones-\frac{1}{\gvol(V)} b \ones^T)B\sign\big(f-\frac{\innerproduct{b}{f}}{\gvol(V)}\ones\big)\\
\partial R_2(f) &= \Big\{\sum_{j=1}^n q^c_{ij} u_{ij}\,|\, u_{ij}=-u_{ji}, \, u_{ij} \in \mathrm{sign}(f_i-f_j)\Big\},
\end{align*}
where $\sign(x)=[-1,1]$ if $x=0$, otherwise it just the sign function.
It is easy to check that $\inner{u,\ones}=0$ for all $u \in \partial S(f)$ and all $f \in \R^n$ and there exists always a vector
$u \in \partial R_2(f)$ for all $f \in \R^n$ such that $\inner{u,\ones}=0$.

In the algorithm the key part is the inner convex problem which one has to solve at each step.
In our case it has the form, 
{\small
\begin{align}
\min_{\norm{f}_2 \le 1} &\;\frac{1}{2} \sum_{i, j = 1}^n (w_{ij}+ \gamma q^m_{ij})  \abs{f_i - f_j} \\ &+\frac{\gamma}{2} \sum_{i, j=1}^n q^c_{ij} (\max(f) - \min(f))
 -  \inner{f,\gamma\, r_{2} + \lambda^k s},\nonumber
\end{align}
where $r_{2} \in \partial R_2(f^{k})$, 
$s \in \partial{S(f^{k})}$ and $\lambda^k=F_\gamma(f^k)$.
}

To solve it more efficiently we derive an equivalent smooth dual formulation for this non-smooth convex problem. 
 We replace $w_{ij}+ \gamma q^m_{ij}$ by $w^\prime_{ij}$ in the following.
\begin{lemma}
Let $E \subset V \times V$ denote the set of edges and $A : \R^E \rightarrow \R^V$ be defined as $(A\alpha)_i = 
\sum_{j | (i,j) \in E} w^\prime_{ij} \alpha_{ij}$. 
Moreover, let $U$ denote the simplex, $U = \{u \in \R^n \ | \ \sum_{i=1}^n u_i = 1, u_i \ge 0, \ \forall i \}$. 
The above inner problem is equivalent to 
\begin{align}\label{opt:org}
& \min_{\big\{\substack{\alpha \in \R^E |  \norm{\alpha}_\infty \le 1, \alpha_{ij} = - \alpha_{ji} \\ {v \in U}}\big\}} \Psi(\alpha, v) := \\
&  c \norm{-A\frac{\alpha}{c} + v + b - P_U\big(-A\frac{\alpha}{c} + v + b\big)}_2,\nonumber
\end{align}
  where $c = \frac{\gamma}{2}{\vol(Q^c)}$, $b = \frac{r_{2}}{c} + \lambda^k \frac{s}{c}$ and $P_U(x)$ is the projection of $x$ on to the simplex $U$. 
    \begin{proof}
 Noting that {\small $\frac{1}{2}\sum_{i,j = 1}^n w^\prime_{ij} \abs{f_i - f_j} = 
 \max_{\{\alpha \in \R^E |  \norm{\alpha}_\infty \le 1, \alpha_{ij} = - \alpha_{ji}\}} \inner{f, A\alpha}$} (see \cite{HeiBue2010}) and
 {\small $\max_{i} f_i = \max_{u \in U} \inner{u, f}$},
 the inner problem can be rewritten as 
 \begin{align*}
 &\min_{\norm{f}_2 \le 1} \max_{\{\alpha \in \R^E |  \norm{\alpha}_\infty \le 1, \alpha_{ij} = - \alpha_{ji}\}} \inner{f, A\alpha} \\
& + c \max_{u \in U} \inner{f, u} + c \max_{v \in U} \inner{-f, v}  - \gamma \inner{f, r_{2}}  - \lambda^k \inner{f, s} \\
& = \min_{\norm{f}_2 \le 1} \max_{\big\{\substack{{\alpha \in \R^E |  \norm{\alpha}_\infty \le 1, \alpha_{ij} = - \alpha_{ji}} \\
 { u, v \in U}}\big\}}\inner{f, A\alpha} \\
 &+ c \inner{f,u} - c \inner{f,v} - \gamma \inner{f, r_{2}} - \lambda^k \inner{f,s} \\
& {\stackrel {(s_1)}{=}} \max_{\substack{{\alpha \in \R^E |}\\{  \norm{\alpha}_\infty \le 1} \\{\alpha_{ij} = - \alpha_{ji}} \\
 { u, v \in U}}} \min_{\norm{f}_2 \le 1}  \inner{f, A\alpha + c(u - v)  - \gamma r_{2} - \lambda^k s} \\
 & {\stackrel {(s_2)}{=}} \max_{\substack{{\alpha \in \R^E |  \norm{\alpha}_\infty \le 1}\\ {\alpha_{ij} = - \alpha_{ji}} 
 { u, v \in U}}} -  \norm{A\alpha - \gamma r_{2} - \lambda^k s + c(u - v)}_2\\
&  = - \min_{\substack{{\alpha \in \R^E |  \norm{\alpha}_\infty \le 1}\\ {\alpha_{ij} = - \alpha_{ji}} \\
 { u, v \in U}}}  \Psi(\alpha, u, v). 
\end{align*}
 The step $s_1$ follows from the standard min-max theorem (see Corollary 37.3.2 in \cite{Roc70}) since $u$, $v$, $\alpha$ 
 and $f$ lie in non-empty compact convex sets.
 In the step $s_2$, we used that the minimizer of the linear function over the Euclidean ball is given by
 \begin{align*}
  f^* = \frac{A\alpha - \gamma r_{2} - \lambda^k s + cu - cv}{\norm{A\alpha - \gamma r_{2} - \lambda^k s + cu - cv}_2},
 \end{align*}
 if $\norm{A\alpha - \gamma r - \lambda^k s + cu - cv}_2 \ne 0$; otherwise $f^*$ is an arbitrary element of the 
 Euclidean unit ball. 
 
 Finally,  we have \norm{A\alpha - r - \lambda^k s + c u - cv} = c \norm{A\frac{\alpha}{c} + u - v - b}. 
  We also know that for a convex set $C$ and any given $y$, $\min_{x \in C} \norm{x-y} = \norm{y - P_C(y)}$, where 
  $P_C(y)$ is the projection of $y$ onto the set $C$. With $y = -A\frac{\alpha}{c} + v + b$, we have for any $\alpha$, 
  $\min_{u, v \in U} \Psi(\alpha, u, v)  = \min_{v \in U} \min_{u \in U} c \norm{u - y} =  \min_{v \in U} c \norm{y - P_U(y)}$
  and from this the result follows.
  \end{proof}
\end{lemma}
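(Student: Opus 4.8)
The plan is to dualize each of the three nonsmooth terms appearing in the inner objective, then exchange the resulting $\min$ and $\max$, evaluate the inner minimization over the Euclidean ball in closed form, and finally reduce the remaining three-variable minimization to the two-variable form $\Psi(\alpha,v)$ by recognizing a Euclidean projection onto the simplex $U$.

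First I would rewrite the objective using support-function (dual) representations. Using the identity $\tfrac{1}{2}\sum_{i,j}w^\prime_{ij}\abs{f_i-f_j}=\max_{\norm{\alpha}_\infty\le 1,\ \alpha_{ij}=-\alpha_{ji}}\innerproduct{f}{A\alpha}$ from \cite{HeiBue2010}, together with $\max_i f_i=\max_{u\in U}\innerproduct{u}{f}$ and $-\min_i f_i=\max_{v\in U}\innerproduct{-f}{v}$, the term $c(\max(f)-\min(f))$ becomes $\max_{u,v\in U}c\,\innerproduct{f}{u-v}$. Substituting all three, the inner problem takes the form
\begin{align*}
\min_{\norm{f}_2\le 1}\ \max_{\substack{\norm{\alpha}_\infty\le 1,\ \alpha_{ij}=-\alpha_{ji}\\ u,v\in U}}\ \innerproduct{f}{A\alpha+c(u-v)-\gamma r_2-\lambda^k s},
\end{align*}
whose objective is continuous and affine in $f$ for each fixed $(\alpha,u,v)$ and affine in $(\alpha,u,v)$ for each fixed $f$.

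Next I would invoke the classical minimax theorem (Corollary 37.3.2 in \cite{Roc70}): the sets $\{f:\norm{f}_2\le 1\}$, $\{\alpha:\norm{\alpha}_\infty\le 1,\ \alpha_{ij}=-\alpha_{ji}\}$ and $U$ are all nonempty, compact and convex, which licenses interchanging the $\min$ and $\max$. After the swap, the innermost problem is $\min_{\norm{f}_2\le 1}\innerproduct{f}{z}$ with $z=A\alpha+c(u-v)-\gamma r_2-\lambda^k s$; its value is $-\norm{z}_2$, attained at $f^\ast=-z/\norm{z}_2$ when $z\neq 0$ and at an arbitrary point of the unit ball otherwise. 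The two points needing care are exactly the verification of the compactness/convexity hypotheses of the minimax theorem and the bookkeeping of this degenerate case $z=0$, which must be recorded so that the recovery of the primal optimizer $f^\ast$ from a dual solution is well-defined; everything else is mechanical.

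Finally I would pull out the scalar $c=\tfrac{\gamma}{2}\vol(Q^c)>0$, writing $\norm{z}_2=c\,\norm{A\tfrac{\alpha}{c}+u-v-b}_2$ with $b$ as defined in the statement, so that the inner problem equals $-\min_{\alpha,u,v}c\,\norm{A\tfrac{\alpha}{c}+u-v-b}_2$. For fixed $\alpha$ and $v$, minimizing over $u\in U$ is, by definition of the projection onto the closed convex set $U$, equal to $\norm{y-P_U(y)}_2$ with $y=-A\tfrac{\alpha}{c}+v+b$; substituting this collapses the three-variable minimization to $-\min_{\alpha,v}\Psi(\alpha,v)$. This establishes the claimed equivalence: the optimal value of the inner problem is the negative of $\min_{\alpha,v}\Psi(\alpha,v)$, and from a minimizer $(\alpha^\ast,v^\ast)$ of $\Psi$ one recovers, via $u^\ast=P_U(y^\ast)$ and then $f^\ast=-z^\ast/\norm{z^\ast}_2$, an optimal $f$ for the inner problem.
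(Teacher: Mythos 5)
Your proposal is correct and follows essentially the same route as the paper's proof: dualize the three nonsmooth terms, swap $\min$ and $\max$ by Rockafellar's minimax theorem over the compact convex sets, evaluate the linear minimization over the Euclidean ball, and absorb the minimization over $u$ into a projection onto the simplex. Your sign for the minimizer, $f^\ast = -z/\norm{z}_2$, is in fact the correct one for $\min_{\norm{f}_2\le 1}\innerproduct{f}{z}$.
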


The smooth dual problem can be solved efficiently using first order projected gradient methods like FISTA \cite{BT09}, 
which has a guaranteed convergence rate of $O(\frac{L}{k^2})$, where $k$ is the number of steps,
and $L$ is the Lipschitz constant of the gradient of the objective.
The bound on the Lipschitz constant for the gradient of the objective in \eqref{opt:org} can be rather loose if the weights are varying
a lot. The rescaling of the variable $\alpha$ introduced in Lemma \ref{le:optnew} 
leads to a better condition number and also to a tighter bound on the Lipschitz constant. 
This results in a significant improvement in practical performance.
\begin{lemma}\label{le:optnew}
Let $B$ be a linear operator defined as $(B\beta)_i := \sum_{j : (i,j) \in E} \beta_{ij}$ and  
let $ s_{ij} = \frac{w^\prime_{ij}} {c}{M}$, for positive constant $M \ge \norm{B}$. The above inner problem is equivalent to 
\[
\min_{\substack{\{\beta \in \R^E | \norm{\beta}_\infty \le s_{ij}, \beta{ij} = - \beta_{ji}\} \\ {v \in U}}} \tilde{\Psi}(\beta, v) := 
 \frac{1}{2}\norm{d - P_U(d)}_2^{2},\]
 where $d = -\frac{B}{M}\beta + v + b$.
 The Lipschitz constant of the gradient of $\tilde{\Psi}$ is upper bounded by 4.
\end{lemma}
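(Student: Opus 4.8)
The plan is to prove the two assertions of Lemma~\ref{le:optnew} separately. The \emph{equivalence} with the inner problem will follow from a linear change of variables in the dual form \eqref{opt:org} established in the preceding lemma, and the \emph{Lipschitz bound} will follow by tracking operator norms through the affine maps composing $\tilde{\Psi}$.

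For the equivalence, I would start from the characterization already proved, namely that the inner problem equals $-\min_{\alpha,v}\Psi(\alpha,v)$ with $\Psi(\alpha,v)=c\,\norm{d_{\alpha}-P_U(d_{\alpha})}_2$, where $d_{\alpha}=-A\tfrac{\alpha}{c}+v+b$ and $\alpha$ ranges over $\{\alpha\in\R^E:\norm{\alpha}_\infty\le 1,\ \alpha_{ij}=-\alpha_{ji}\}$. Introduce the componentwise substitution $\beta_{ij}=s_{ij}\alpha_{ij}$; since $w^{\prime}$ is symmetric, $s_{ij}=s_{ji}>0$, so this is a bijection onto $\{\beta\in\R^E:\norm{\beta}_\infty\le s_{ij},\ \beta_{ij}=-\beta_{ji}\}$. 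Using $(A\alpha)_i=\sum_j w^{\prime}_{ij}\alpha_{ij}$, $(B\beta)_i=\sum_j\beta_{ij}$ and $s_{ij}=\tfrac{w^{\prime}_{ij}}{c}M$, a one-line computation gives $\tfrac1c A\alpha=\tfrac1M B\beta$, hence $d_{\alpha}=-\tfrac1M B\beta+v+b=d$ and $\Psi(\alpha,v)=c\,\norm{d-P_U(d)}_2=c\sqrt{2\,\tilde{\Psi}(\beta,v)}$. Since $t\mapsto c\sqrt{2t}$ is strictly increasing on $[0,\infty)$, the minimizers correspond under the bijection, and from a minimizer $(\beta^{\ast},v^{\ast})$ of $\tilde{\Psi}$ one recovers $\alpha^{\ast}_{ij}=\beta^{\ast}_{ij}/s_{ij}$ and then the optimal $f^{\ast}$ of the inner problem exactly as in the preceding lemma.

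For the Lipschitz bound I would use the classical fact that $h(d)=\tfrac12\,\mathrm{dist}(d,U)^2=\tfrac12\norm{d-P_U(d)}_2^2$ is differentiable on all of $\R^n$ with $\nabla h(d)=d-P_U(d)$. Writing $\tilde{\Psi}=h\circ L$ for the affine map $L(\beta,v)=-\tfrac1M B\beta+v+b$, the chain rule gives $\nabla\tilde{\Psi}(\beta,v)=\bigl(-\tfrac1M B^{\ast}g,\ g\bigr)$ with $g:=d-P_U(d)$. For two feasible points, writing $\Delta\beta,\Delta v$ for the differences, $\Delta d=-\tfrac1M B\,\Delta\beta+\Delta v$ and $\Delta g=g_1-g_2$, I chain three elementary estimates: (i) $\norm{\bigl(-\tfrac1M B^{\ast}\Delta g,\ \Delta g\bigr)}_2^2\le\bigl(1+\norm{B}^2/M^2\bigr)\norm{\Delta g}_2^2\le 2\norm{\Delta g}_2^2$, using $\norm{B^{\ast}}=\norm{B}\le M$; (ii) $\norm{\Delta g}_2\le\norm{\Delta d}_2+\norm{P_U(d_1)-P_U(d_2)}_2\le 2\norm{\Delta d}_2$, since $P_U$ is nonexpansive; (iii) $\norm{\Delta d}_2^2\le 2\bigl(\norm{B}^2/M^2\bigr)\norm{\Delta\beta}_2^2+2\norm{\Delta v}_2^2\le 2\norm{(\Delta\beta,\Delta v)}_2^2$. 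Composing, $\norm{\nabla\tilde{\Psi}(\beta_1,v_1)-\nabla\tilde{\Psi}(\beta_2,v_2)}_2^2\le 2\cdot 4\cdot 2\,\norm{(\Delta\beta,\Delta v)}_2^2=16\,\norm{(\Delta\beta,\Delta v)}_2^2$, i.e.\ $\nabla\tilde{\Psi}$ is $4$-Lipschitz.

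The only genuinely non-mechanical point is pinning down the scaling $\beta_{ij}=s_{ij}\alpha_{ij}$: it must simultaneously convert the box $\norm{\alpha}_\infty\le 1$ into $\norm{\beta}_\infty\le s_{ij}$ \emph{and} make $\tfrac1c A\alpha$ collapse onto $\tfrac1M B\beta$, which forces the exact form $s_{ij}=\tfrac{w^{\prime}_{ij}}{c}M$; after that the rest is bookkeeping, and the hypothesis $M\ge\norm{B}$ enters precisely to keep $\norm{B}/M\le 1$ in steps (i) and (iii). In the Lipschitz part the only real choice is whether to invoke nonexpansiveness of $P_U$ directly in (ii), which yields the stated constant $4$, or the sharper $1$-Lipschitz continuity of $d\mapsto d-P_U(d)$, which would give $2$; I would keep the cruder bound since it matches the claim and already suffices for the $O(L/k^2)$ convergence guarantee of FISTA.
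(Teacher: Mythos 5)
Your proof is correct and follows essentially the same route as the paper's: the same substitution $\beta_{ij}=s_{ij}\alpha_{ij}$ for the equivalence, and the same chain of estimates (triangle inequality, nonexpansiveness of $P_U$, and the bound $\norm{B}/M\le 1$) yielding $\sqrt{16}=4$ for the Lipschitz constant. If anything you are slightly more careful than the paper on the equivalence step, since you explicitly account for the fact that $\Psi$ is $c$ times a distance while $\tilde{\Psi}$ is half a squared distance, relating them by the monotone map $t\mapsto c\sqrt{2t}$.
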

  \begin{proof}
 Let $\beta_{ij} = s_{ij} \alpha_{ij}$. Then $(A\frac{\alpha}{c})_i = \sum_{j : (i,j) \in E} w^\prime_{ij}\frac{\alpha_{ij} }{c} 
 = \sum_{j : (i,j) \in E} \frac{\beta_{ij}}{M} 
 = (\frac{B}{M}\beta)_i$ and constraints on $\alpha$  transform to $-s_{ij} \le \beta_{ij} \le s_{ij}$ and 
$ \beta_{ij}  =  -\beta_{ij}$. 
Since the mapping between $\alpha$ and $\beta$ is one-to-one, the transformation yields an equivalent problem
 (in the sense that minimizer of one problem can be easily derived from minimizer of the other problem).

Now we derive a bound on the Lipschitz  constant. 

The gradient of $\Psi$ at $x=(\beta, v)$ w.r.t $\beta$, and $v$ are given by
  \begin{align*}
  (\nabla{\Psi(x)})_\beta = - \frac{B^T}{M} (d - P_U(d)), \\
    (\nabla{\Psi(x)})_v =  (d - P_U(d)),
  \end{align*}
   where $B^T$ is the adjoint operator of $B$ given by $(B^Td)_{ij} = (d_i - d_j)$.
   
  Let $x^\prime = (\beta^\prime, v^\prime)$ denote any other point and $d^\prime = -\frac{B}{M}\beta^\prime + v^\prime + b$.
   then we have
\begin{align*}
 &\norm{ \nabla{\Psi(x)} - \nabla{\Psi(x^\prime)} }^2 \\
 &=  \frac{\norm{B^T}^2}{M^2} \norm{(d  -P_U(d)) - (d^\prime - P_U(d^\prime)} ^2 \\
  &+ \norm{(d-P_U(d)) - (d^\prime - P_U(d^\prime))}^2 \\
 &=  \biggr(1+ \frac{\norm{B^T}^2}{M^2}\biggr)\norm{((d  - d^\prime) + (-P_U(d)  + P_U(d^\prime))} ^2 \\
& \le 2\ \biggr(1+ \frac{\norm{B^T}^2}{M^2}\biggr)  (\norm{d - d^\prime}^2  + \norm{-P_U(d)  + P_U(d^\prime)}^2  )\\
& \le 4\ \biggr(1+ \frac{\norm{B^T}^2}{M^2}\biggr) \norm{d - d^\prime}^2\\
& = 4\ \biggr(1+ \frac{\norm{B^T}^2}{M^2}\biggr)  \norm{\frac{B}{M} (-\beta + \beta^\prime) + (v - v^\prime) }^2\\
&  \le  4 \biggr(1+ \frac{\norm{B^T}^2}{M^2}\biggr)  \max\biggr\{ \frac{\norm{B^T}^2}{M^2}, 1\biggr\} \\
&\ \norm{ (-\beta + \beta^\prime) + (v - v^\prime) }^2\\
& \le  8\ \biggr(1+ \frac{\norm{B^T}^2}{M^2}\biggr) \max\biggr\{ \frac{\norm{B^T}^2}{M^2}, 1\biggr\}\\
&   \ ( \norm{ \beta - \beta^\prime}^2 + \norm{v - v^\prime) }^2)\\
& =  8\ \biggr(1+ \frac{\norm{B^T}^2}{M^2}\biggr) \max\biggr\{ \frac{\norm{B^T}^2}{M^2}, 1\biggr\}  \  \norm{ x - x^{\prime}}^2 
\end{align*}
Hence the Lipschitz constant is upper bounded by 
$ \sqrt{8\ \biggr(1+ \frac{\norm{B^T}^2}{M^2}\biggr) \max\biggr\{ \frac{\norm{B^T}^2}{M^2}, 1\biggr\}} \le 4,$ 
since $M \ge \norm{B}$.
\end{proof}

We can choose $M$ by upper bounding $\norm{B}$ using
\[ \norm{B}^2 \le \max_r \sum_{(r,j) \in E} 1^2 = \max_r \neigh(r),\]
where $\neigh(r)$ is the number of neighbors of vertex $r$.

Despite the problem of minimizing $F_\gamma$ is non-convex and thus global convergence is not guaranteed, Algorithm \ref{alg:ratio_dc} has the following quality guarantee.
 \begin{theorem}
 Let $(C, \overline{C})$ be any partition and let $\lambda = \Ncut(C,\overline{C})$. 
 If one uses $\ones_{C}$ as the initialization of the 
 Algorithm \ref{alg:ratio_dc}, then the algorithm either terminates in one step or outputs an $f^1$ which yields
 a partition $(A, \overline{A})$ such that 
 \begin{align*}
  \hat{F}_\gamma(A)  <  \hat{F}_\gamma(C)
 \end{align*}
 Moreover, if $(C, \overline{C})$ is consistent and if we set for $\gamma$ any value larger than $\frac{\gvol(V)}{4} \lambda$ 
 then $A$ is also consistent and  $\Ncut(A,\overline{A})  <  \Ncut(C,\overline{C})$. 
 \end{theorem}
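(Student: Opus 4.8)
The plan is to combine four facts that are already available in the paper: Lemma~\ref{le:FGammaOnIndicators} (which identifies $\hat F_\gamma$ with $F_\gamma$ on indicator vectors), the optimal-thresholding inequality \eqref{thresholding_better_result} obtained in the proof of Theorem~\ref{th:equ}, the monotone-descent property of the general scheme in Algorithm~\ref{alg:ratio_dc} established in \cite{HeiSet2011}, and the elementary estimate used in the proof of Lemma~\ref{le:gamma_bound}. Throughout I treat $(C,\overline{C})$ as a non-trivial partition (the only case in which $\hat F_\gamma(C)$ is finite), so that $f^0=\ones_C$ is admissible.

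First I would observe that the initialization is an indicator, $f^0=\ones_{C}$, so by Lemma~\ref{le:FGammaOnIndicators} we have $F_\gamma(f^0)=\hat F_\gamma(C)$. By the analysis of the general scheme (Algorithm~\ref{alg:ratio_dc}) in \cite{HeiSet2011} — using that the subgradient choices are taken with $\inner{r_2,\ones}=\inner{s,\ones}=0$, which keeps all iterates non-constant so that the ratio $F_\gamma$ is well defined and optimal thresholding is applicable — the algorithm either terminates in one step or produces a non-constant $f^1$ with $F_\gamma(f^1)<F_\gamma(f^0)=\hat F_\gamma(C)$.

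Assuming the algorithm does not terminate in one step, I would set $A:=C^{*}_{f^1}=\argmin_{\min_i f^1_i\le t<\max_i f^1_i}\hat F_\gamma(C^{t}_{f^1})$, the partition produced by optimally thresholding $f^1$; by construction $A$ is a nonempty proper subset of $V$, hence $(A,\overline{A})$ is non-trivial and Lemma~\ref{le:FGammaOnIndicators} applies to it. Inequality \eqref{thresholding_better_result} in the proof of Theorem~\ref{th:equ} gives $\hat F_\gamma(A)=F_\gamma(\ones_{A})\le F_\gamma(f^1)$, and chaining this with the descent step yields
\begin{align*}
\hat F_\gamma(A)\ \le\ F_\gamma(f^1)\ <\ \hat F_\gamma(C),
\end{align*}
which is the first assertion. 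For the second assertion, suppose $(C,\overline{C})$ is consistent, so that $\hat F_\gamma(C)=\Ncut(C,\overline{C})=\lambda$, and fix $\gamma>\frac{\gvol(V)}{4}\lambda$. If $A$ violated at least one constraint we would have $\hat M(A)+\hat N(A)\ge 2$, and then, exactly as in the proof of Lemma~\ref{le:gamma_bound} with $l=0$ (using $\bal(A)\le\gvol(V)/2$ and $\Ncut(A,\overline{A})>0$ since $G$ is connected),
\begin{align*}
\hat F_\gamma(A)\ =\ \Ncut(A,\overline{A})+\gamma\,\frac{\hat M(A)+\hat N(A)}{\bal(A)}\ >\ \frac{4\gamma}{\gvol(V)}\ >\ \lambda,
\end{align*}
contradicting $\hat F_\gamma(A)<\hat F_\gamma(C)=\lambda$. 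Hence $A$ is consistent, so $\hat F_\gamma$ agrees with $\Ncut$ on it, and therefore $\Ncut(A,\overline{A})=\hat F_\gamma(A)<\hat F_\gamma(C)=\Ncut(C,\overline{C})$.

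The only step that is not a direct substitution is the strict one-step descent $F_\gamma(f^1)<F_\gamma(f^0)$ together with the non-constancy of $f^1$; this is precisely where the structural properties of the iteration from \cite{HeiSet2011} are needed — namely that the inner convex subproblem strictly decreases the ratio unless a fixed point (i.e. termination) has already been reached, and that the zero-mean conditions on $r_2$ and $s$ force the iterates to be non-constant so that optimal thresholding is meaningful. Once that is granted, the rest is routine bookkeeping with Lemma~\ref{le:FGammaOnIndicators}, \eqref{thresholding_better_result}, and the bound from the proof of Lemma~\ref{le:gamma_bound}.
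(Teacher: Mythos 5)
Your proof is correct and follows essentially the same route as the paper's: initialize at the indicator so $F_\gamma(f^0)=\hat F_\gamma(C)$ by Lemma~\ref{le:FGammaOnIndicators}, invoke the strict descent property of Algorithm~\ref{alg:ratio_dc} from \cite{HeiSet2011}, apply the optimal-thresholding inequality \eqref{thresholding_better_result} to get $\hat F_\gamma(A)\le F_\gamma(f^1)<\hat F_\gamma(C)$, and rule out inconsistent $A$ via the $l=0$ estimate from Lemma~\ref{le:gamma_bound}. Your write-up is in fact slightly more careful than the paper's, since it makes explicit the $l=0$ bound and the non-constancy of the iterates needed for thresholding.
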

 \begin{proof}
  Algorithm \ref{alg:ratio_dc} generates $\{f^k\}$ such that 
 $ F_\gamma(f^{k+1})  < F_\gamma(f^k)$ until it terminates \cite{HeiSet2011}, 
 we have $F_\gamma(f^1) < F_\gamma(f^0) = \hat{F}_\gamma(C)$, if the algorithm does not stop in one step. 
  As shown in theorem \ref{th:equ}, optimal thresholding of $f^1$ results in a partition $(A, \overline{A})$ such that $\hat{F}_\gamma(A)
  \le F_\gamma(f^1) < \hat{F}_\gamma(C)$.
 If $C$ is consistent, we have  $F_\gamma(f^0)= \hat{F}_\gamma(C) =\Ncut(C,\overline{C}) = \lambda$, 
 by Lemma \ref{le:FGammaOnIndicators}.
  For the chosen value of $\gamma$, using a singular argument as in  Lemma \ref{le:gamma_bound}, one sees that for any inconsistent subset $B$, 
   $\hat{F}_\gamma(B) > \lambda$ and hence $A$ is consistent with $Q$. 
  Then it is immediate that $ \Ncut(A, \overline{A}) = \hat{F}_\gamma(A)  \le F_\gamma(f^1) < F_\gamma(f^0) = \Ncut(C, \overline{C})$. 
  \end{proof}
 In practice,  the best results can be obtained by first minimizing $F_\gamma$ for 
$\gamma=0$ (unconstrained problem) and then increase the value of $\gamma$ and use the previously obtained
clustering as initialization. This process is iterated until the current partition violates not 
more than a given number of constraints.

 \section{Soft- versus Hard-Constrained Normalized Cut Problem}\label{sec:soft_ver}
 The need for a soft version arises, for example, if the constraints are noisy or inconsistent. 
   Moreover, as we illustrate in the next section, we use the soft version to extend
   our clustering method to the multi-partitioning problem.
Using the bound of Lemma \ref{le:gamma_bound} for $\gamma$, we can solve the soft constrained problem
   for any given number of violations.
   \begin{figure}
   \centering
   \includegraphics[width=60mm]{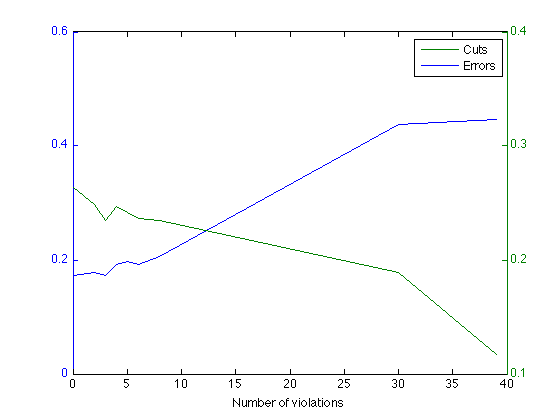}
\caption{\label{fig:dec_gamma} Influence of $\gamma$ on cut and clustering error.}
\end{figure}

It appears from a theoretical point of view that, due to noise, satisfying all constraints should not be the best choice. 
However, in our experiments it turned out, that typically the best results were achieved when all constraints were satisfied. 
We illustrate this behavior for the dataset Sonar, where we generated 80 constraints and increased $\gamma$ from zero
until all constraints were satisfied.  In Figure \ref{fig:dec_gamma}, we plot cuts and errors versus the number of violated constraints.  One observes that the best error is obtained when all constraints were satisfied.
Since by 
enforcing always all given constraints, our method becomes parameter-free (we increase $\gamma$ until all constraints are satisfied), we chose this option for the experiments. 
%
 \section{Multi-Partitioning with Constraints}
   In this section we present a method to integrate constraints in a multi-partitioning setting. 
   In the multi-partitioning problem, one seeks a $k$-partitioning $(C_1, \ldots, C_k)$ 
   of the graph such that the normalized multi-cut given by
 \begin{equation}\label{eq:multicut}
 \sum_{i=1}^k \Bcut(C_i, \overline{C_i})
\end{equation}
   is minimized.
A straightforward way to generate a multi-partitioning is to use a
recursive bi-partitioning scheme. Starting with all points as the
initial partition, the method repeats the following steps until
the current partition has $k$ components.

\begin{enumerate}
\item split each of the components in the current partition into two parts.
\item  choose among the above splits the one minimizing the multi-cut criterion.
\end{enumerate}

Now we extend this method to the constrained case. 
Note that it is always possible to perform a binary split which satisfies
all must-link constraints. Thus, must-link constraints pose no difficulty
in the multi-partitioning scheme, as all must-link constraints can be
integrated using the procedure given in \ref{sec:ml_sparse}. 

However, satisfying all cannot-link constraints is sometimes not possible
(cyclic constraints) and usually also not desirable at each level of
the recursive bi-partition, since an early binary split cannot
separate all classes.
The issues here is which cannot-link constraints should be considered
for the binary split in step 1.

To address this issue, we use the soft-version of our formulation
where we need only to specify the maximum number, $l$, of violations
allowed. We derive this number $l$ assuming the following simple uniform
model of the data and constraints. We assume that all classes have equal
size and there is an equal number of cannot link constraints between all
pairs of classes. Assuming that any binary split  does not destroy
the class structure, the maximum number of violation is obtained if
one class is separated from the rest. Precisely, the
expected value of this number, given $N$ cannot-link constraints and $k$
classes, is $\frac{(k-1)(k-2)/2}{k (k-1)/2}\ N$. 
In the first binary split, these numbers ($N$ and $k$) are known. In the succesive binary splits, $N$ is known, 
while $k$ can again be derived, assuming the uniform model, as $\frac{k}{n}\ \tilde{n}$, where $\tilde{n}$ is 
the size of the current component. 

     We illustrate our approach using an artificial dataset (mixture of Gaussians, 500 points, 2 dimensions).
Figure \ref{fig:multi_toy} shows on the left the ground truth and the solution of unconstrained ($\gamma$=0)
     multi-partitioning. In the unconstrained solution, points belonging to the same class are split into two clusters while points from other two classes are merged into a single cluster. On the rightmost, the result of our constrained multi-partitioning framework 
     with 80 randomly generated constraints is shown. 
     
     \begin{figure}
     \begin{tabular}{ccc}
\hspace{-9mm}
 \includegraphics[width=0.2\textwidth]{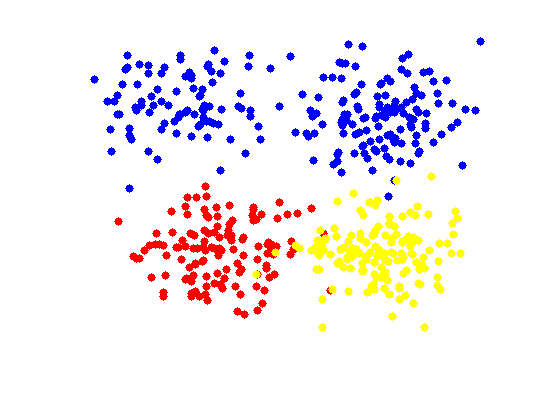}
\hspace{-7mm}
   \includegraphics[width=0.2\textwidth]{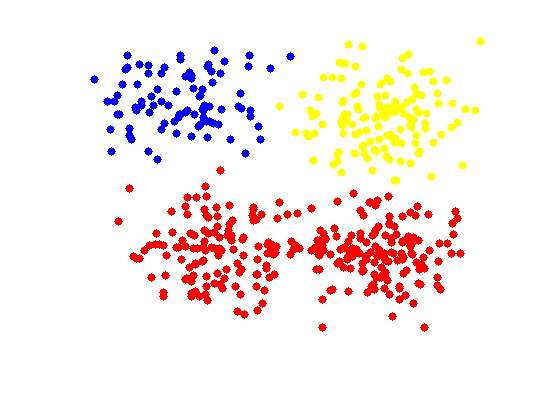}
\hspace{-7mm}
     \includegraphics[width=0.2\textwidth]{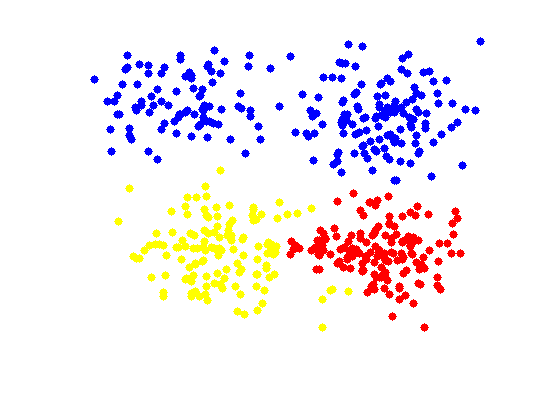}
               \end{tabular}
      \vspace{-8mm}
          \caption{\label{fig:multi_toy} Left: ground-truth, middle: clustering obtained by unconstrained
$1$-spectral clustering, right: clustering obtained by the constrained version.}
     \end{figure}
     
        
 \section{Experiments}

We compare our method  against the 
following four related constrained clustering approaches: Spectral Learning (SL) \cite{KamKleMan2003},
 Flexible Constrained Spectral Clustering (CSP) \cite{WanDav2010}, 
  Constrained Clustering via Spectral Regularization (CCSR) \cite{LiLiuTan2009} and Spectral
  Clustering with Linear Constraints (SCLC) \cite{XuLiSch2009}. 
 SL integrates the constraints by simply modifying the weight matrix such that the edges connecting must-links have maximum weight 
 and the edges of cannot-links have zero weight. 
 CSP starts from the spectral relaxation and restricts the space of feasible solutions 
 to those that satisfy a certain amount (specified by the user) of constraints. 
 This amounts to solving a full generalized eigenproblem 
  and choosing among the eigenvectors corresponding to positive eigenvalues the one 
 that has minimum cost. 
 CCSR addresses the problem of incorporating the constraints in the multi-class problem directly by an SDP
which aims at adapting the spectral embedding to be consistent with the constraint information. 
 For CSP and CCSR we use the code provided by the authors on their webpages.
 
  In SCLC one solves the spectral relaxation of the normalized cut problem subject to linear constraints \cite{EriOlsKah2007, XuLiSch2009}.
 Cannot-links and must-links are encoded via linear constraints as follows \cite{EriOlsKah2007}: if the vertices $p$ and $q$ cannot-link
  (resp. must-link) then add a constraint $f_p = -f_q$ (resp. $f_p = f_q$).
   Although must-links are correctly formulated, one can argue that
   the encoding of cannot-links has modeling drawbacks.
    First observe that any solution that assigns zero to the constrained vertices $p$ and $q$ still 
    satisfies the corresponding cannot-link constraint although it is not feasible to the constrained cut problem. 
    Moreover, one can observe from the derivation of spectral relaxation \cite{Lux2007}, that 
    vertices belonging to different components need to have only different signs but not the same value.  
    Encoding cannot-links this way introduces bias towards partitions of equal volume, which can be observed in the experiments. 

 Our evaluation is based on three criteria: clustering error, normalized cut and fraction of constraints violated. For the clustering error we take the known labels and classify each cluster 
using majority vote. In this way each point is assigned a label and the clustering error is the error of this labeling. We use this measure as it is the expected error one would obtain when using 
simple semi-supervised learning, where one labels each cluster using majority vote.  


The summary of the datasets considered is given in 
Table \ref{tab:UCI}. 
 The data with missing values are removed 
 and the $k$-NN similarity graph is constructed from the remaining data as in \cite{BueHei2009}. 
\begin{table}
\begin{center}
\begin{tabular}{|l|c|c|c|}
\hline
 Dataset       &  Size & Features & Classes\\
\hline
 Sonar         & 208   & 60  & 2\\
  Spam          & 4207 & 57   & 2\\
 USPS          & 9298  & 256   & 10\\
 Shuttle 	   & 58000 & 9 & 7\\
  MNIST (Ext)
   &630000 & 784 &10\\
\hline
\end{tabular}
\caption{\label{tab:UCI} UCI datasets. The extended MNIST dataset is generated by translating each original input image of MNIST by one pixel, 
  i.e., 8 directions.}
\end{center}
\end{table}
In order to illustrate the performance in case of highly unbalanced problems, 
 we create a binary problem (digit 0 versus rest) from USPS.
The constraint pairs are generated in the following manner.
We randomly sample pairs of points and for each pair, we introduce a cannot or must-link constraint based on the labels of the sampled pair. 
The results, averaged over $10$ trials are shown in Table \ref{tab:Plots} for 
 2-class problems and in Table \ref{tab:Plots_multi} for multi-class problems\footnote{
  CSP could not scale to these large datasets, as the method solves the full (generalized) eigenvalue problem where 
the matrices involved are not sparse. 
}. 
 In the plots our method is denoted as COSC and we enforce always all constraints (see discussion in Section \ref{sec:soft_ver}).
 Since our formulation is a non-convex problem, we use the best result (based on the achieved cut value) of 10 runs with 
 random initializations.
 Except our method, no other method can guarantee to satisfy all constraints, even though SCLC does so in
all cases.
 Our method produces always much better cuts than the ones found by SCLC
  which shows that our method is better suited for solving the constrained normalized cut problem. 
 In terms of the clustering error, our method is consistently better than other methods.
  In case of unbalanced datasets (Spam, USPS 0 vs rest) our
  method significantly outperforms SCLC in terms of cuts and clustering error. 
 Moreover, because of  hard encoding of constraints, CSLC cannot solve multi-partitioning problems. 

\begin{table*}
\begin{tabular}{ccc}
\includegraphics[width=0.31\textwidth]{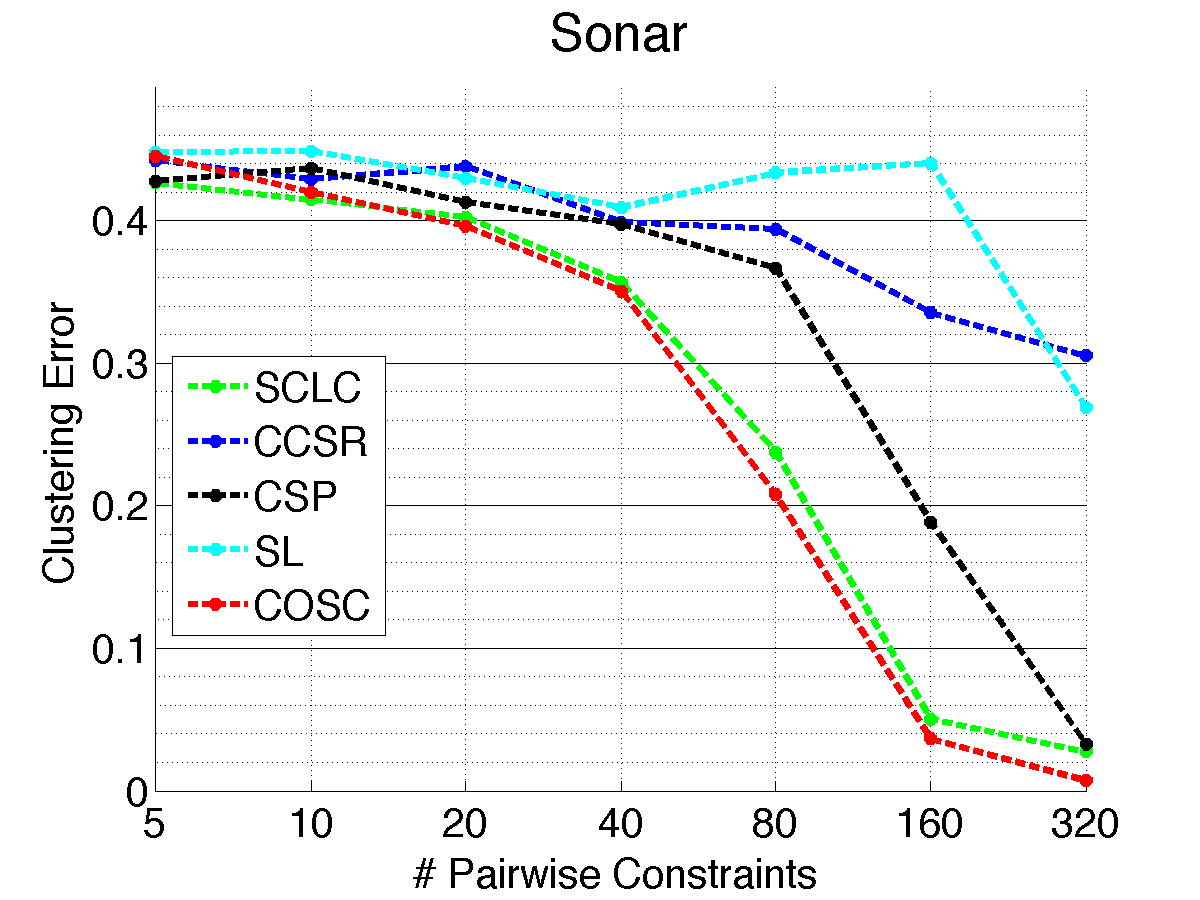}
&  \includegraphics[width=0.31\textwidth]{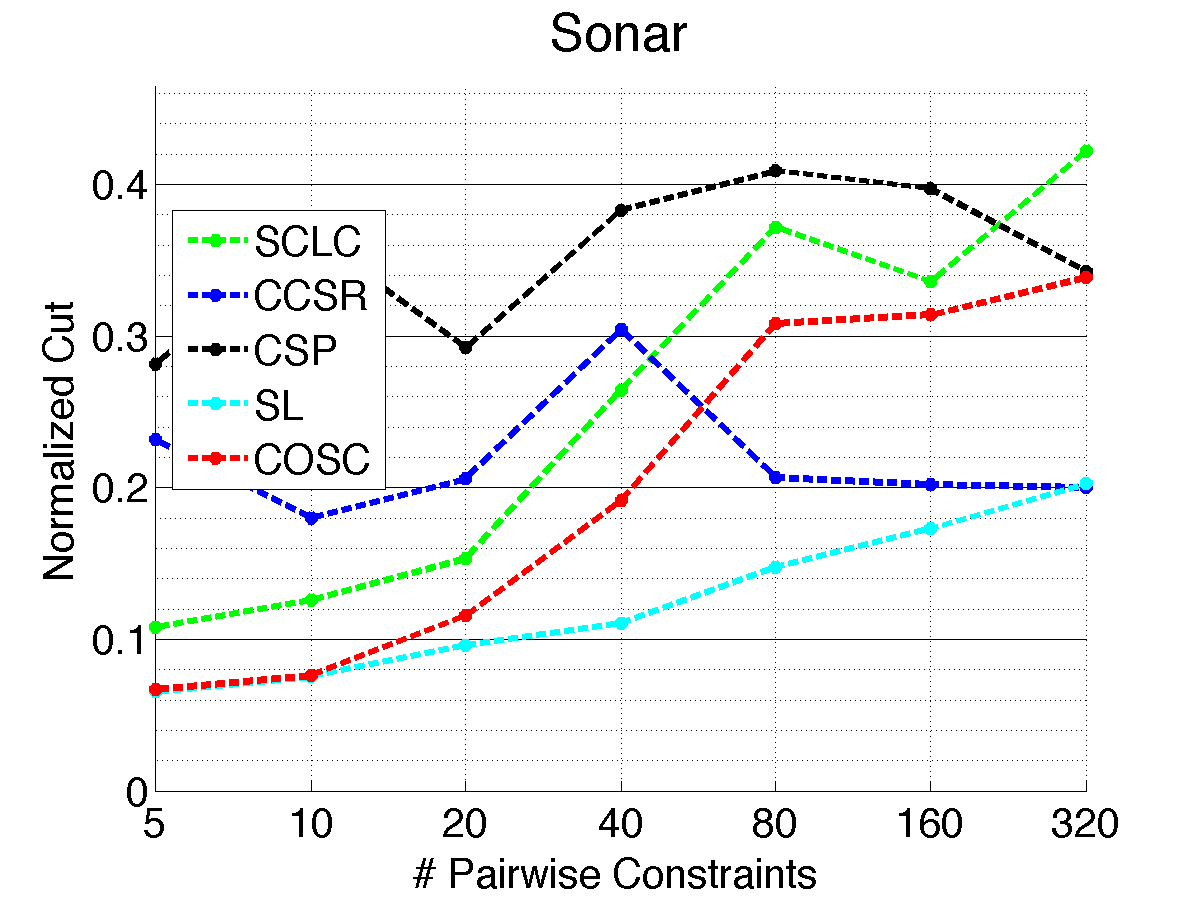}
  	&\includegraphics[width=0.31\textwidth]{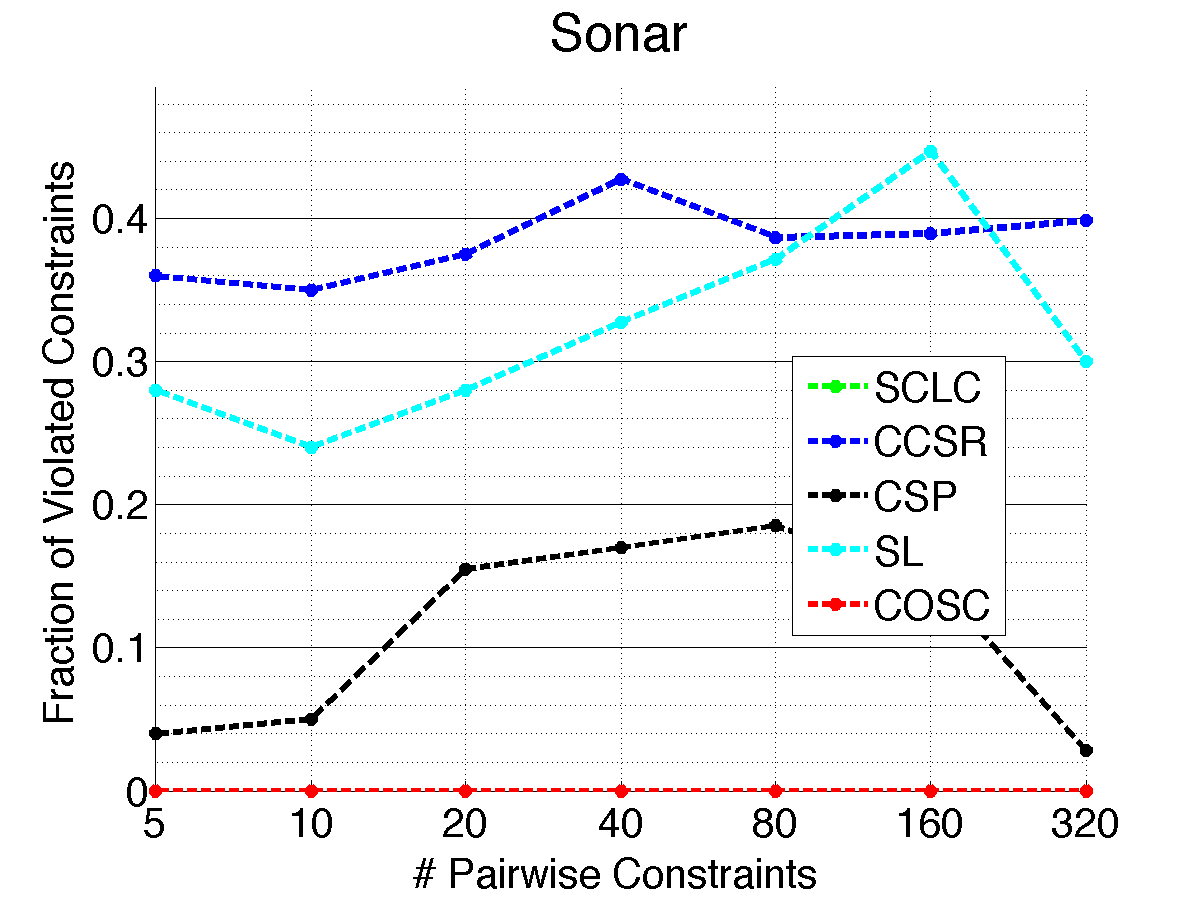}\\
  	\includegraphics[width=0.31\textwidth]{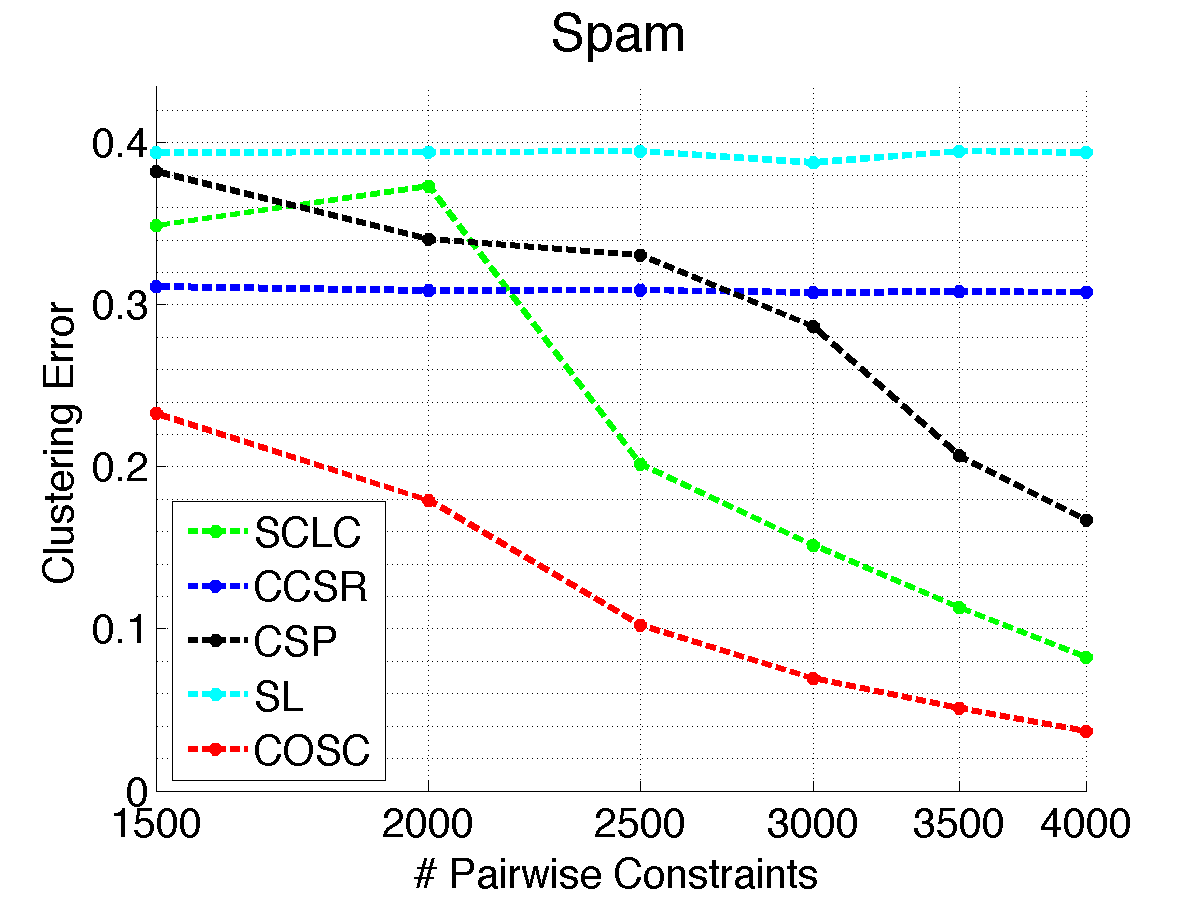}
	 & \includegraphics[width=0.31\textwidth]{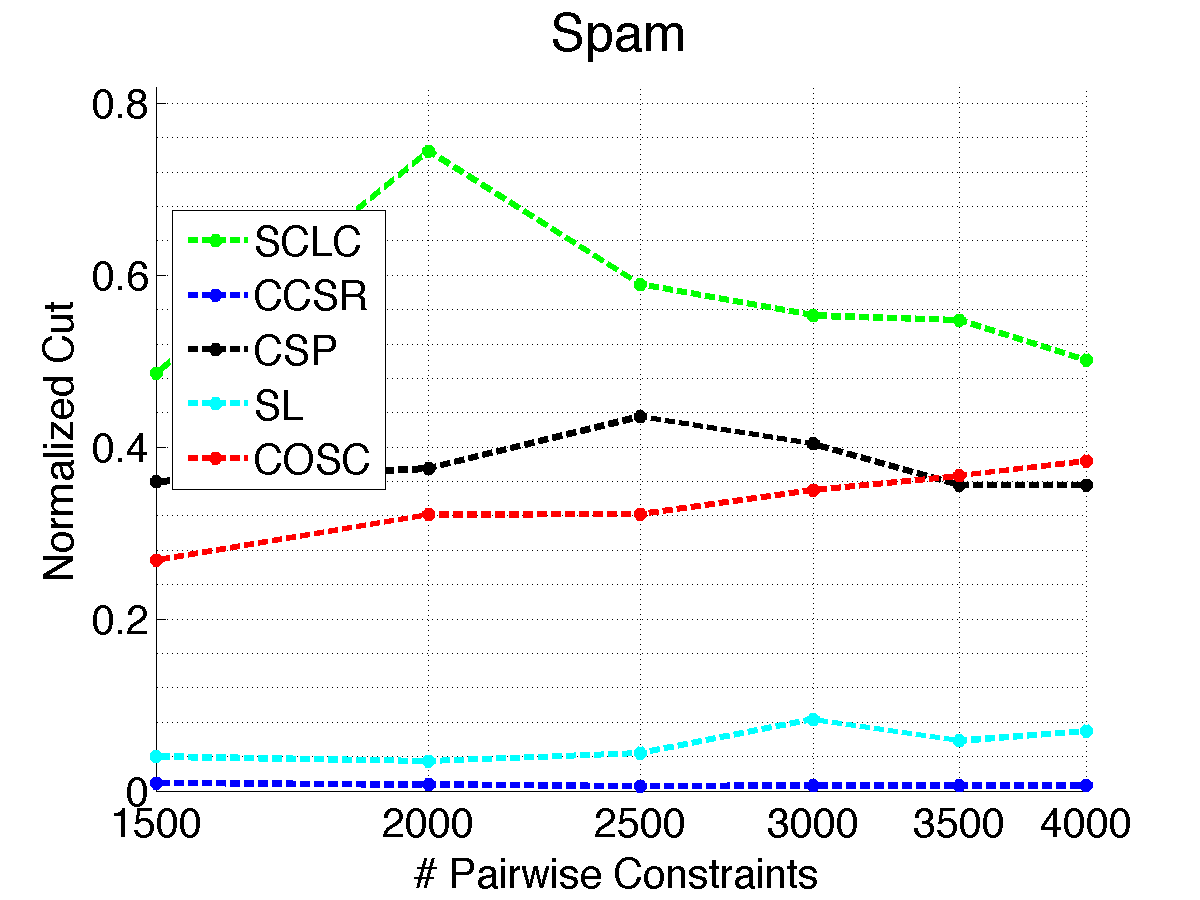}
	&\includegraphics[width=0.31\textwidth]{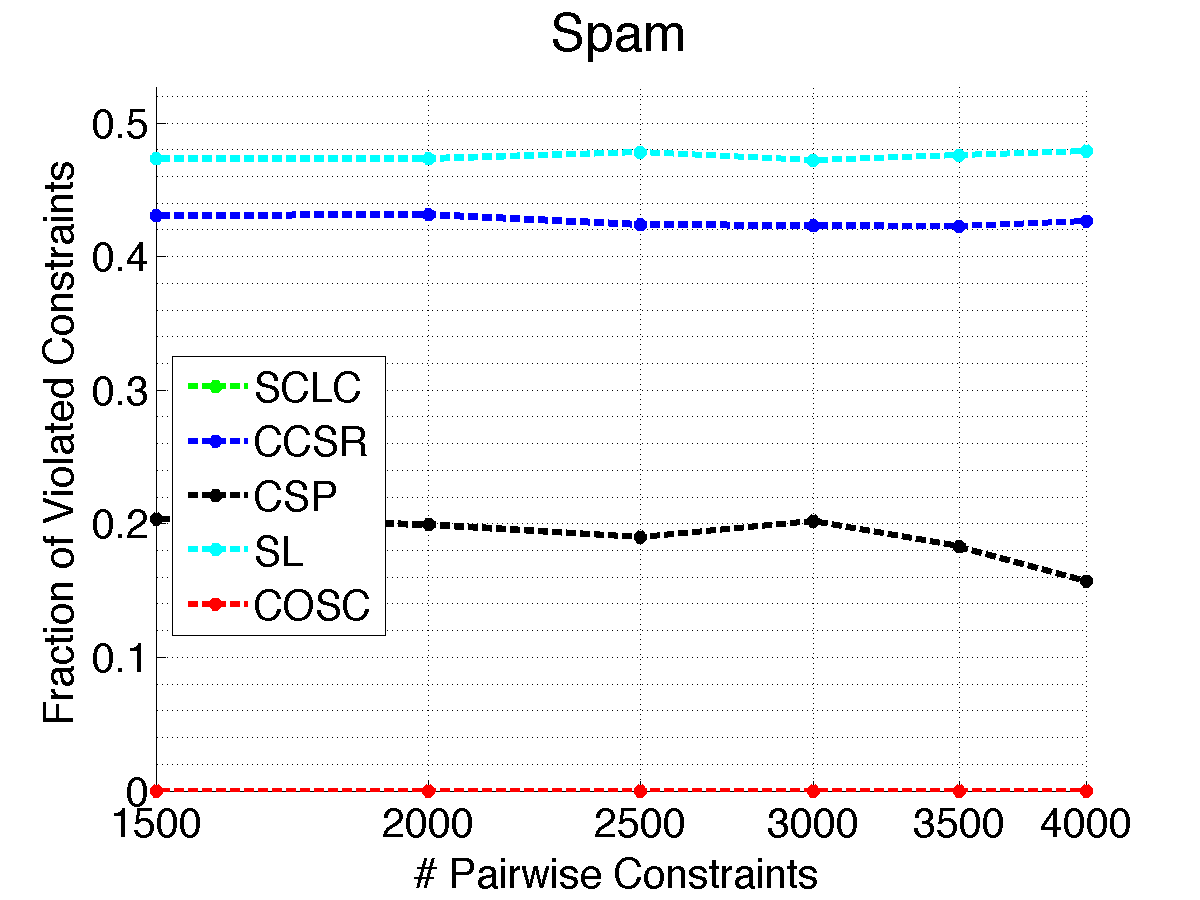}\\
  \includegraphics[width=0.31\textwidth]{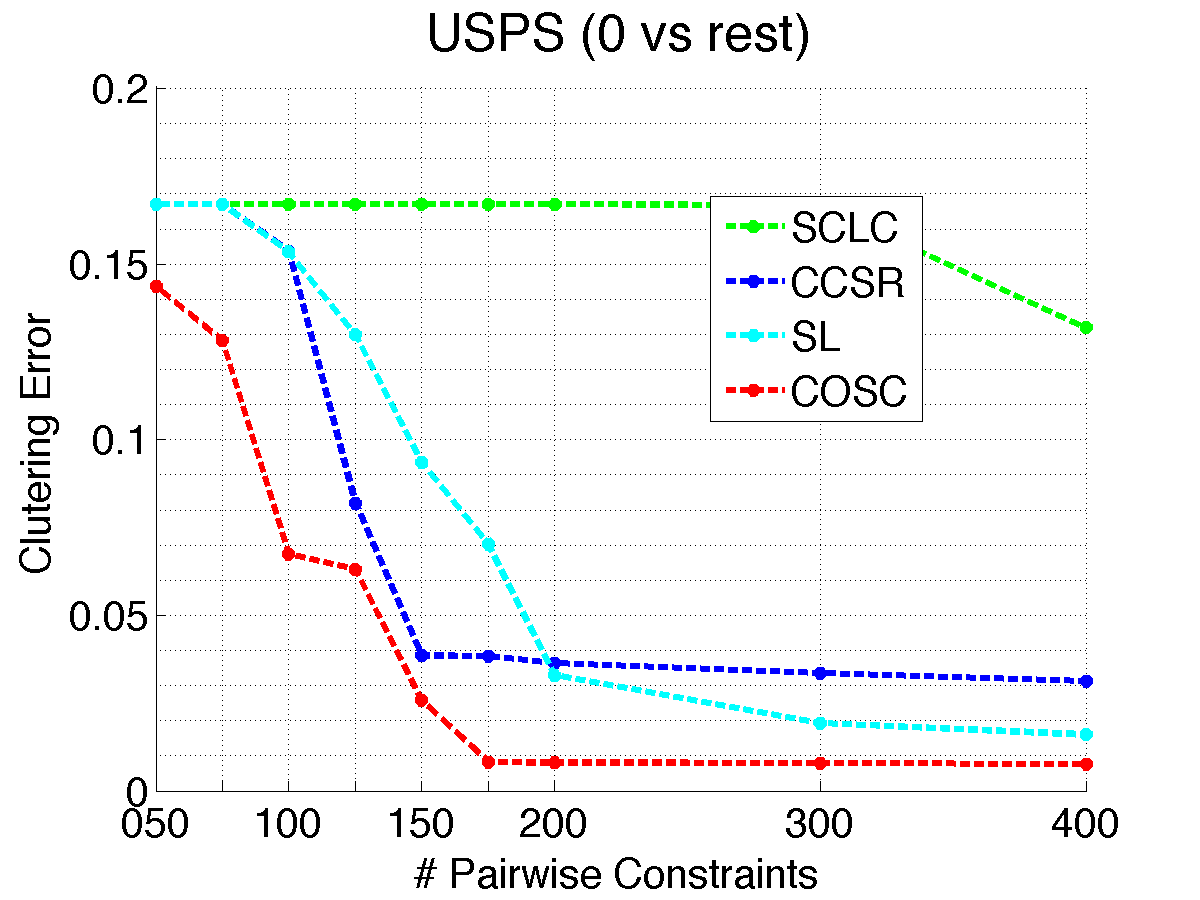}
  &  \includegraphics[width=0.31\textwidth]{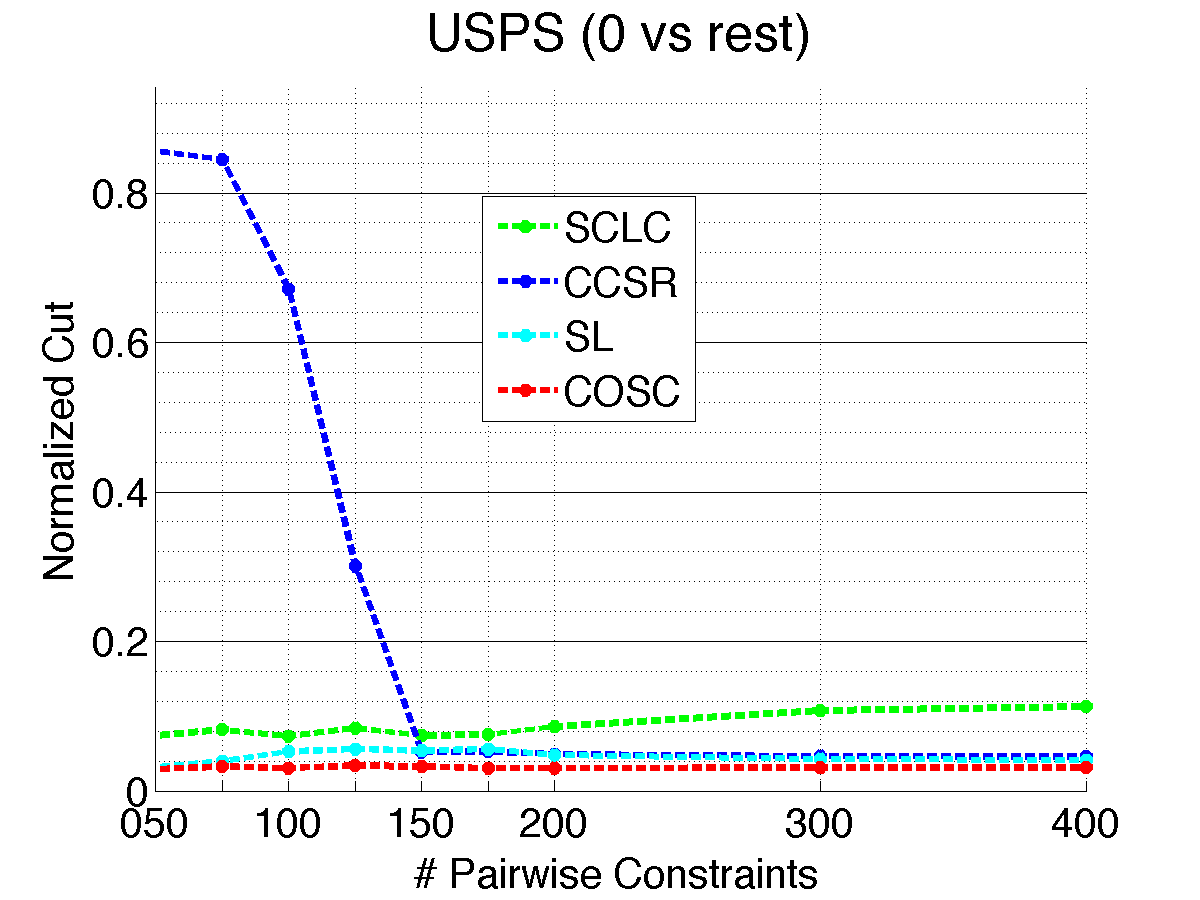}
	&\includegraphics[width=0.31\textwidth]{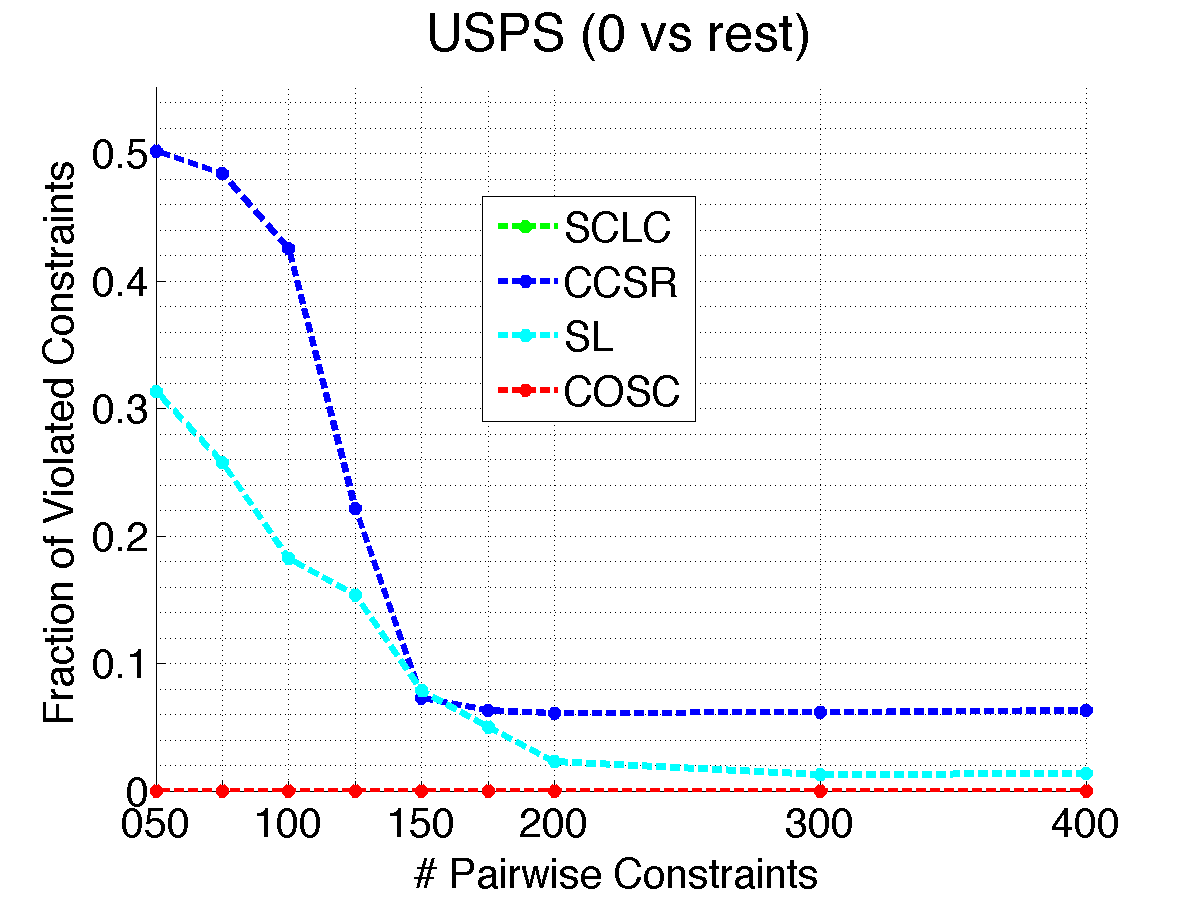}\\
	\end{tabular}
\caption{\label{tab:Plots}Results for \textbf{binary partitioning}: Left: clustering error versus number of constraints, Middle: normalized cut versus number of constraints, Right: fraction of violated constraints versus number of constraints. }
\end{table*}
 \begin{table*}
\begin{tabular}{ccc}
	\includegraphics[width=0.31\textwidth]{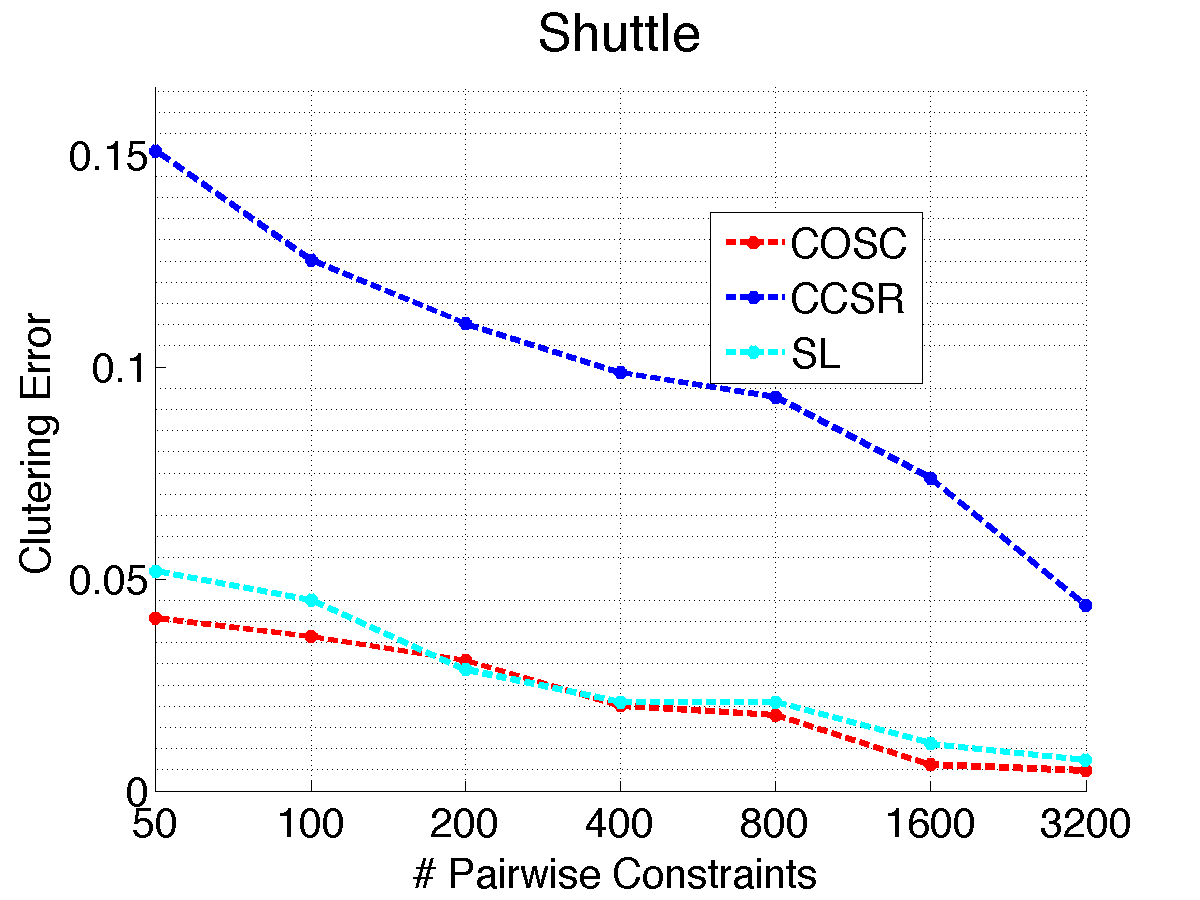}	
	  &\includegraphics[width=0.31\textwidth]{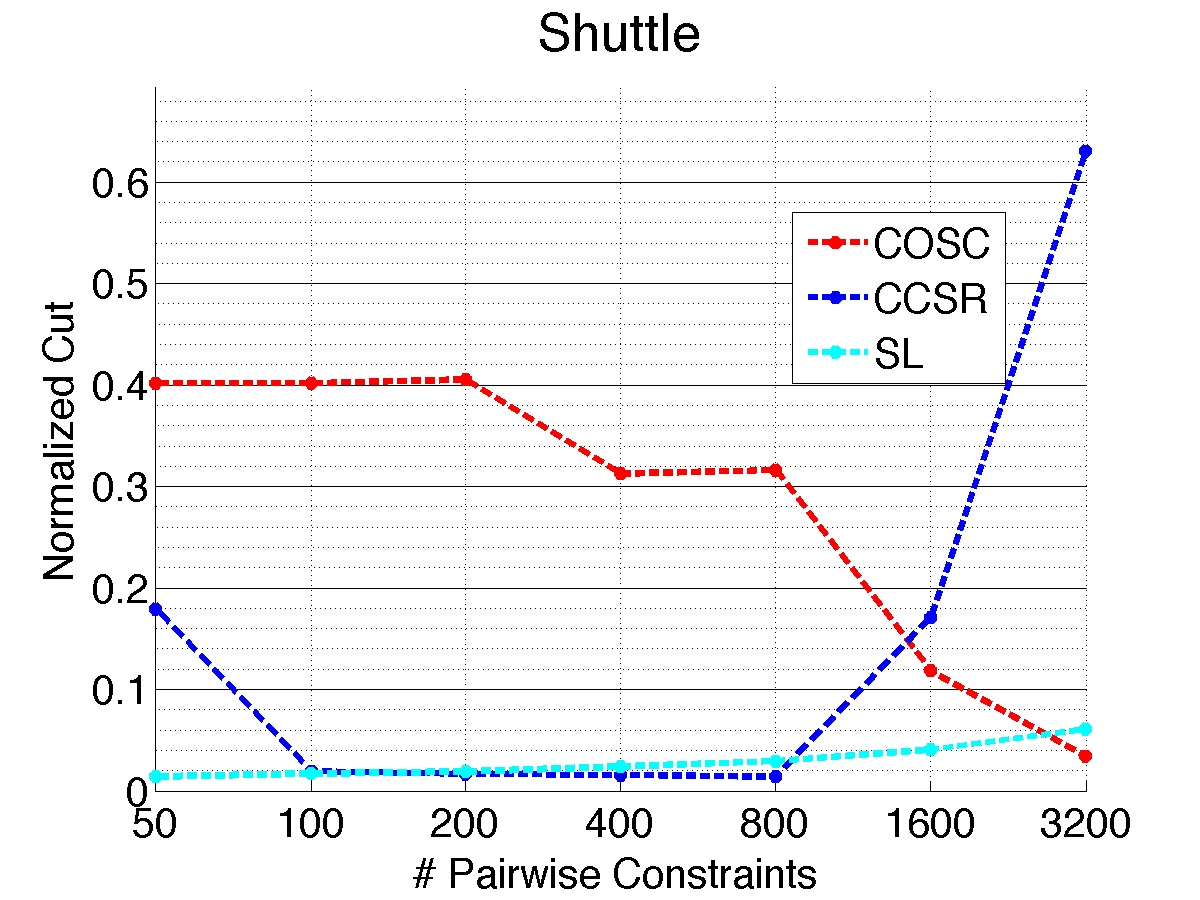}
	&\includegraphics[width=0.31\textwidth]{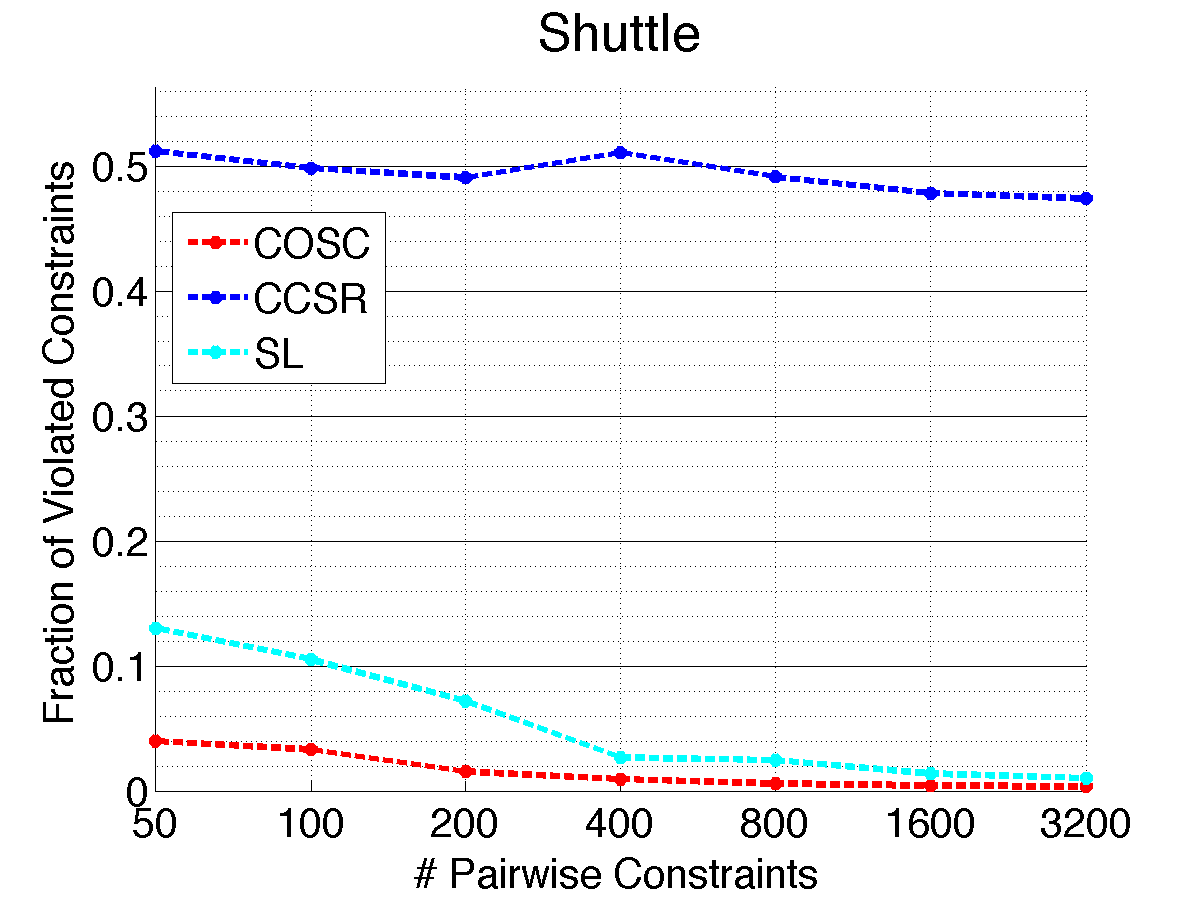}\\
\includegraphics[width=0.31\textwidth]{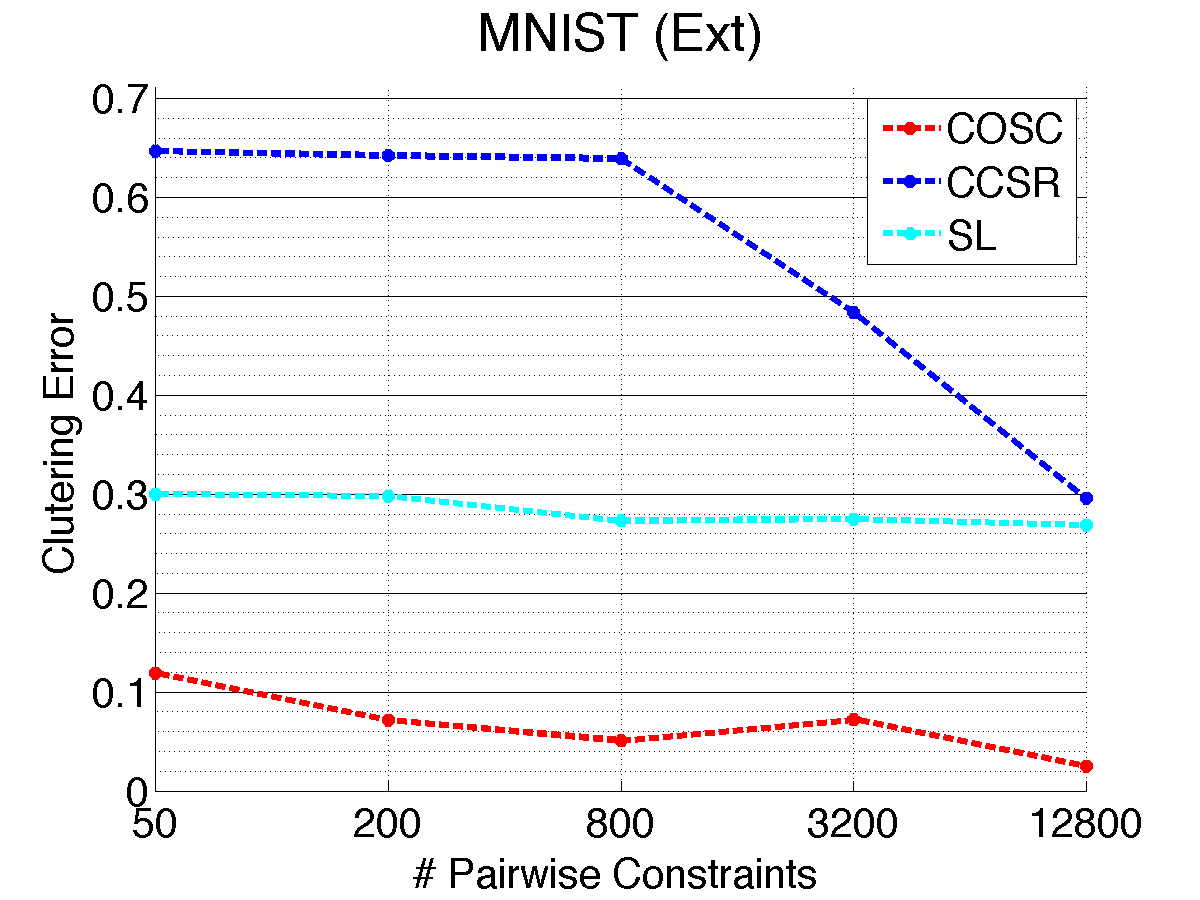}
&  \includegraphics[width=0.31\textwidth]{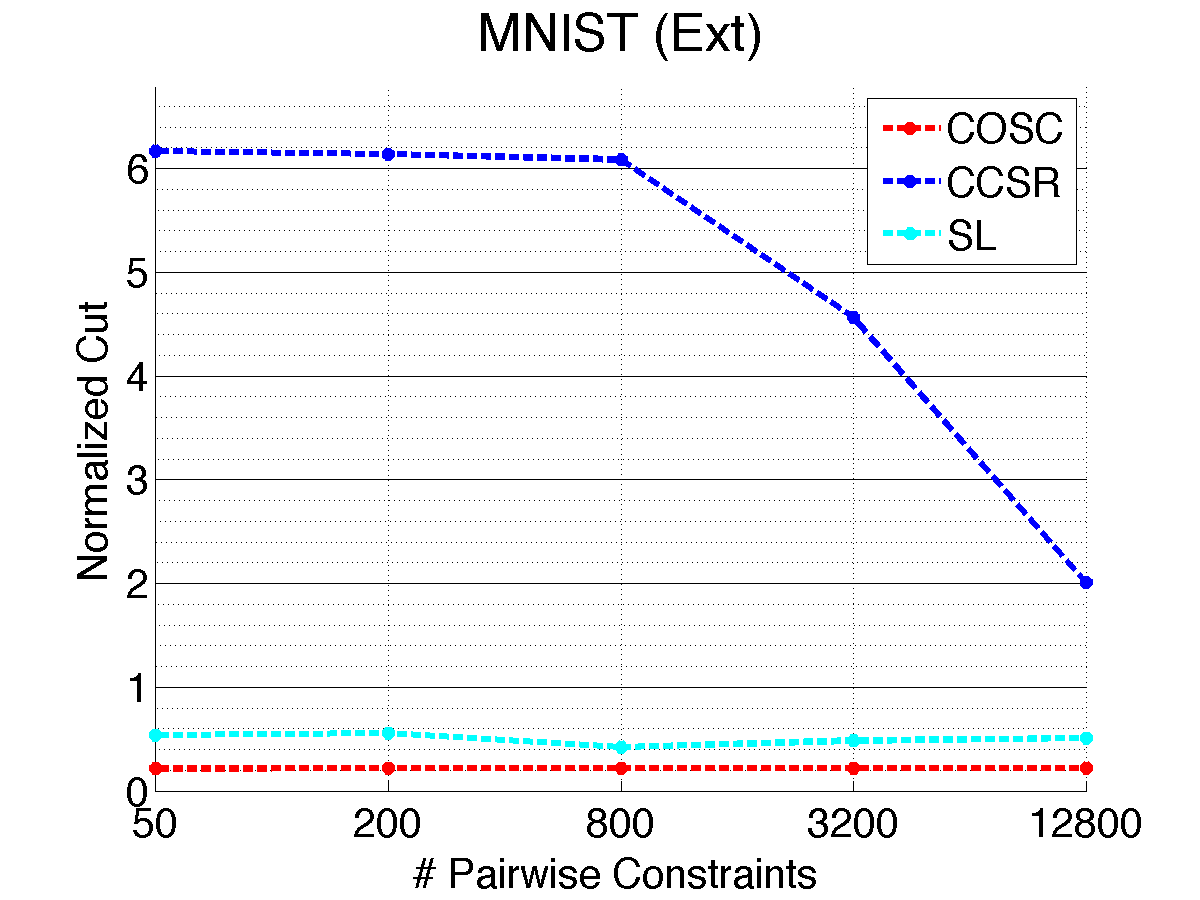}
	&\includegraphics[width=0.31\textwidth]{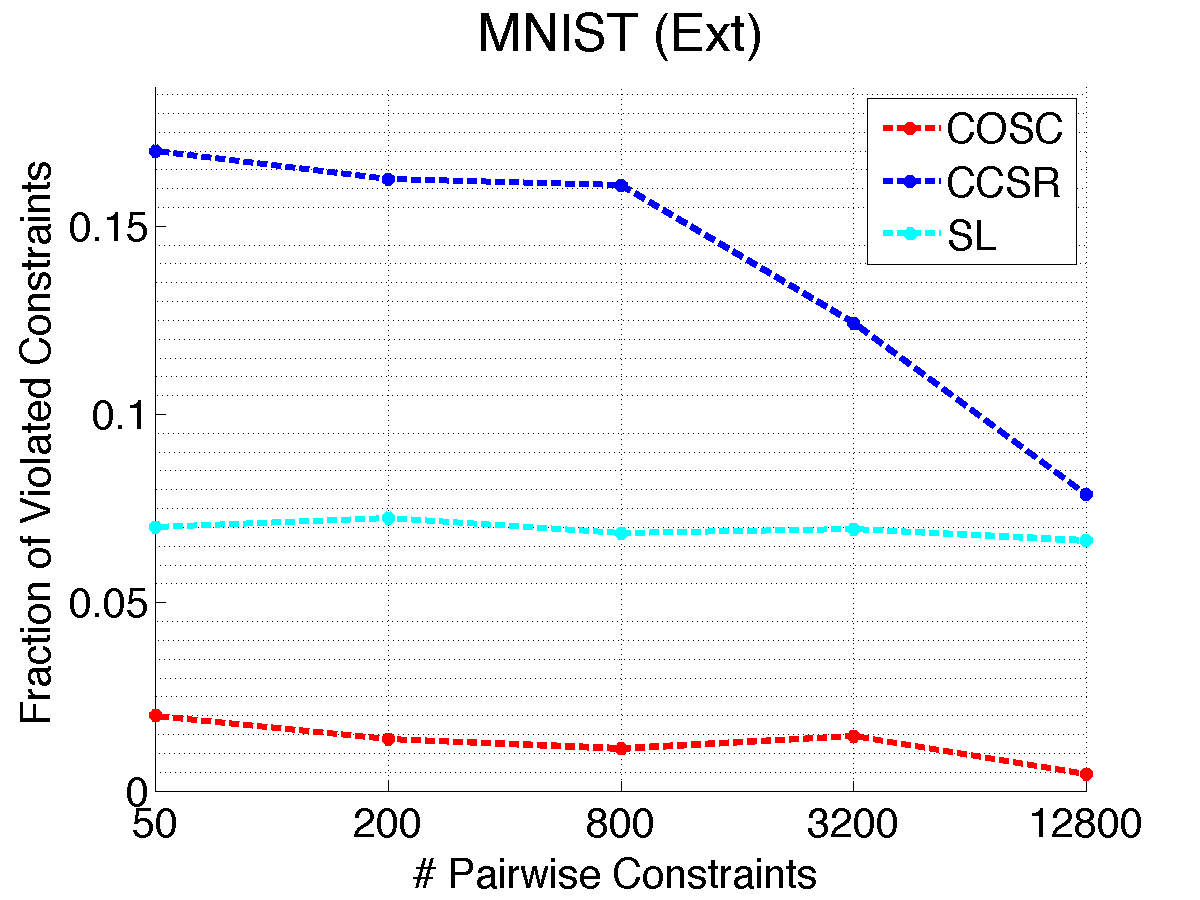}\\
		\end{tabular}
\caption{\label{tab:Plots_multi}Results for \textbf{multi-partitioning} - Left: clustering error versus number of constraints, Middle: normalized cut versus number of constraints, Right: fraction of violated constraints versus number of constraints.}
\end{table*}

\subsection{Additional experimental results}
 Additional experimental results are given in Tables \ref{app:tab:Plots} and   \ref{app:tab:Plots_multi} 
  for the datasets given in Table \ref{tab:UCI_additional}
 \begin{table}\label{tab:UCI_additional}
\begin{center}
\begin{tabular}{|l|c|c|c|}
\hline
 Dataset       &  Size & Features & Classes\\
\hline
 Breast Cancer & 263   & 9  & 2 \\  
 Heart         & 270   & 13  & 2\\
 Diabetis      & 768   & 8   & 2\\
 USPS          & 9298  & 256   & 10\\
 MNIST	    &70000 & 784 &10\\
\hline
\end{tabular}
\caption{Additional UCI datasets}
\end{center}
\end{table}

\begin{table*}
\begin{tabular}{ccc}
	 \includegraphics[width=0.32\textwidth]{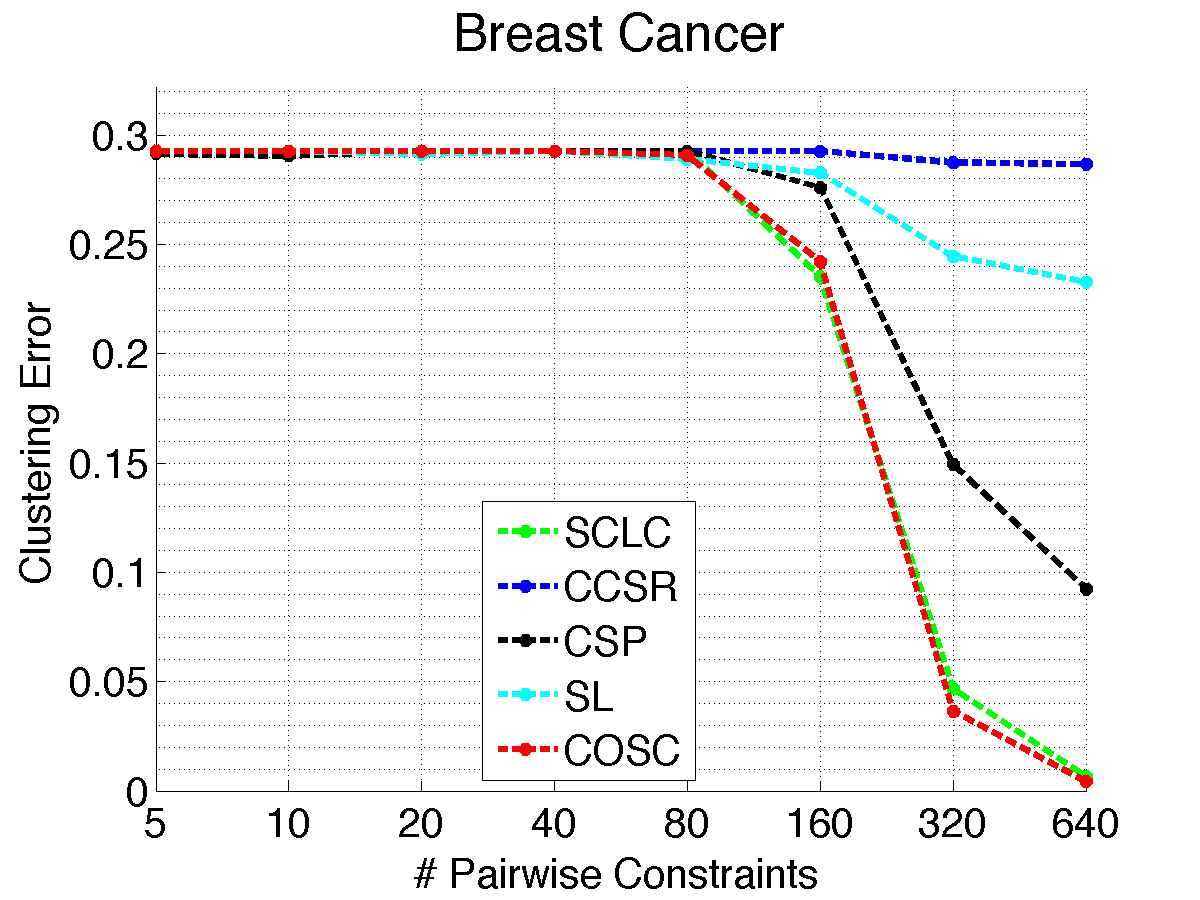}
  &\includegraphics[width=0.32\textwidth]{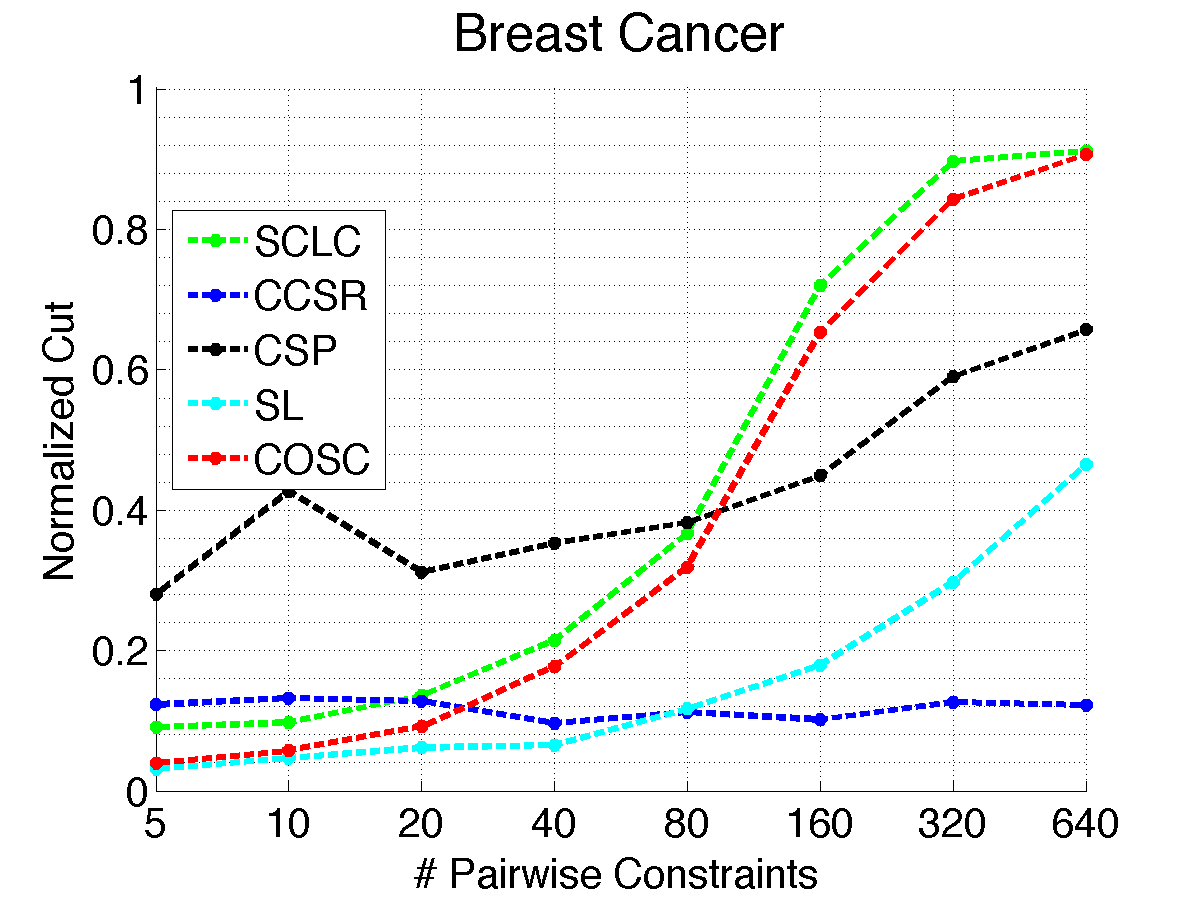}
	&\includegraphics[width=0.32\textwidth]{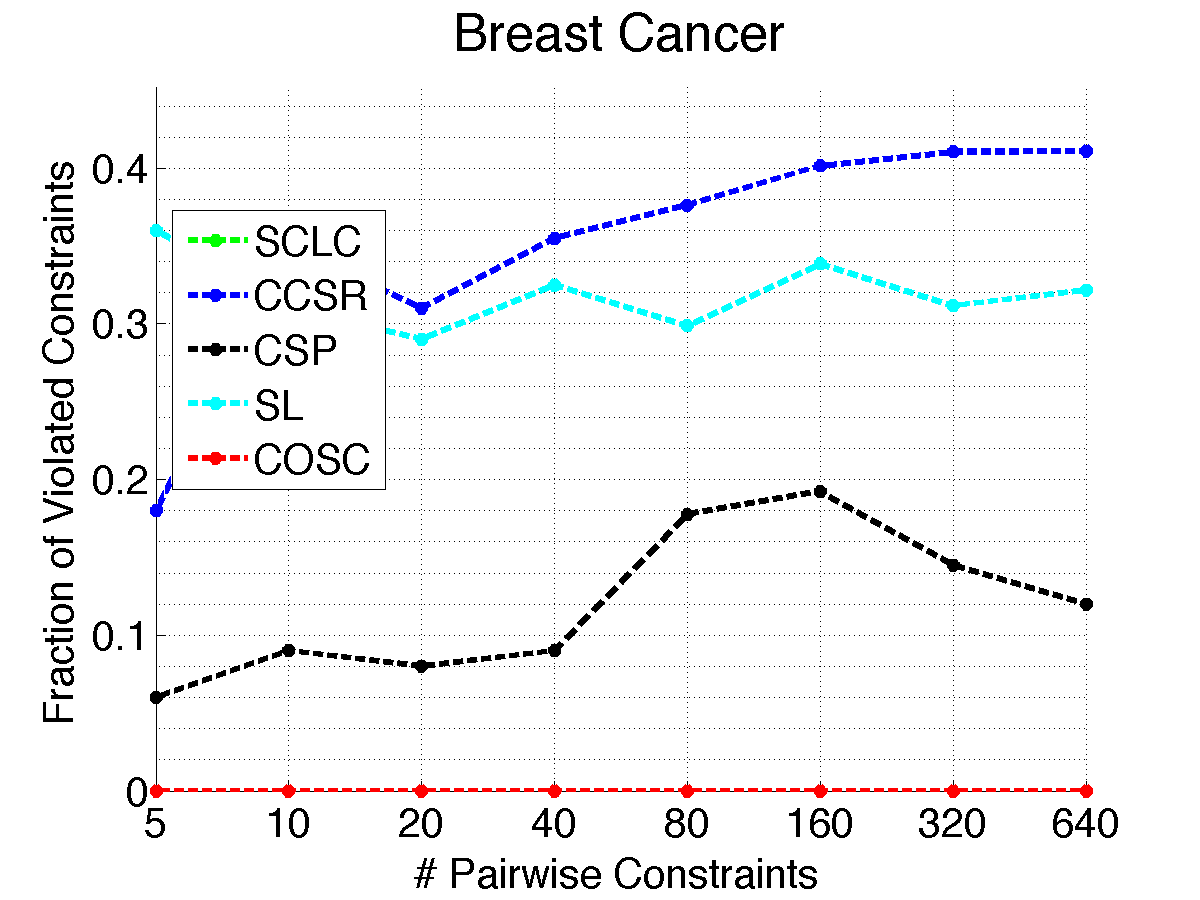}\\
	\includegraphics[width=0.31\textwidth]{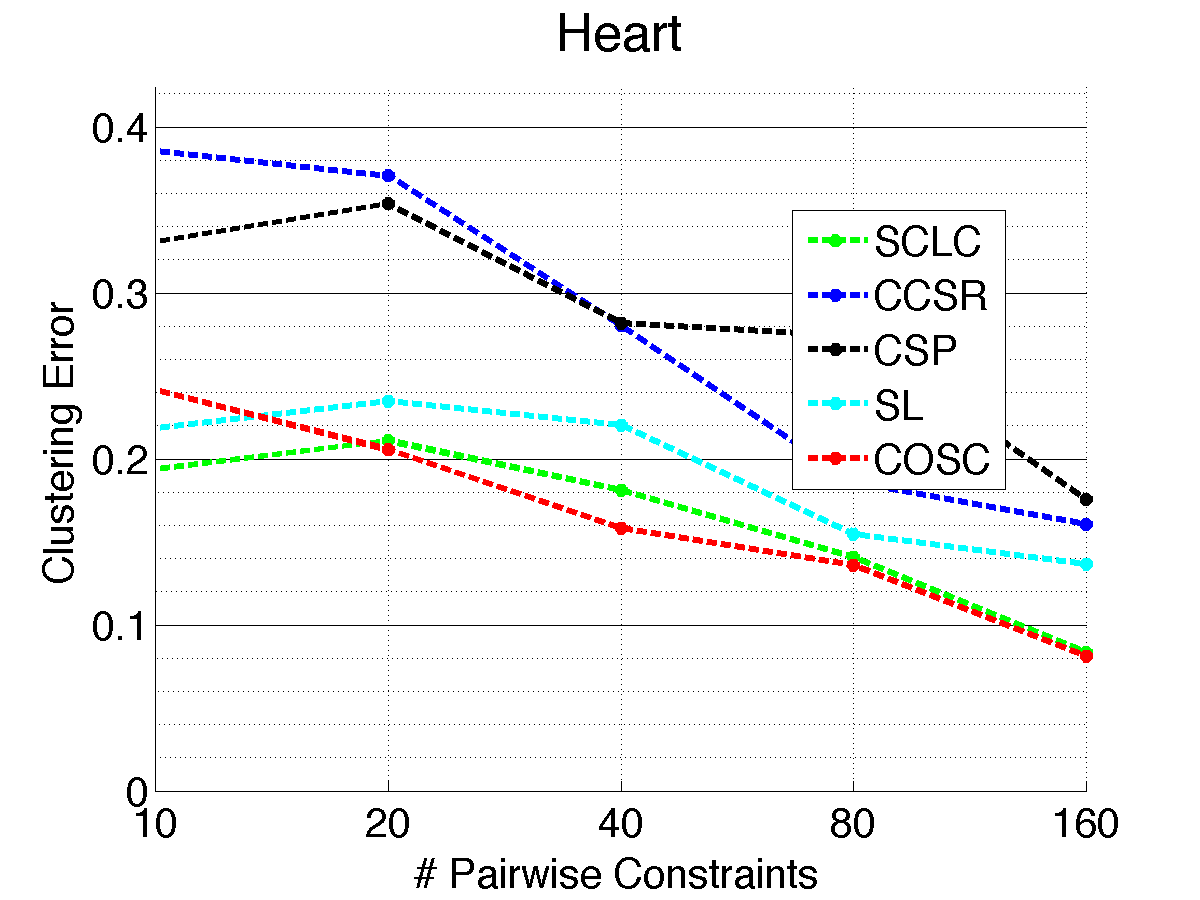}
&  \includegraphics[width=0.31\textwidth]{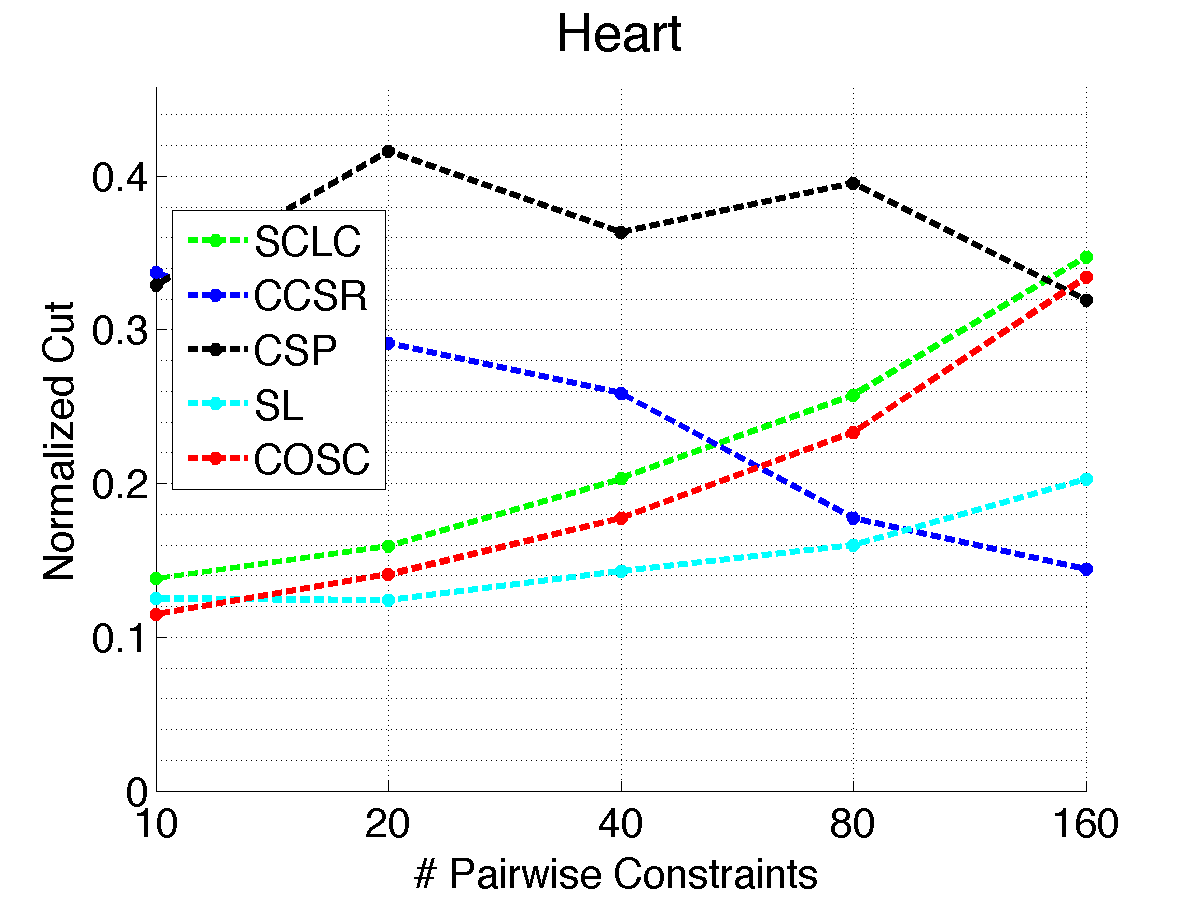}
  	&\includegraphics[width=0.31\textwidth]{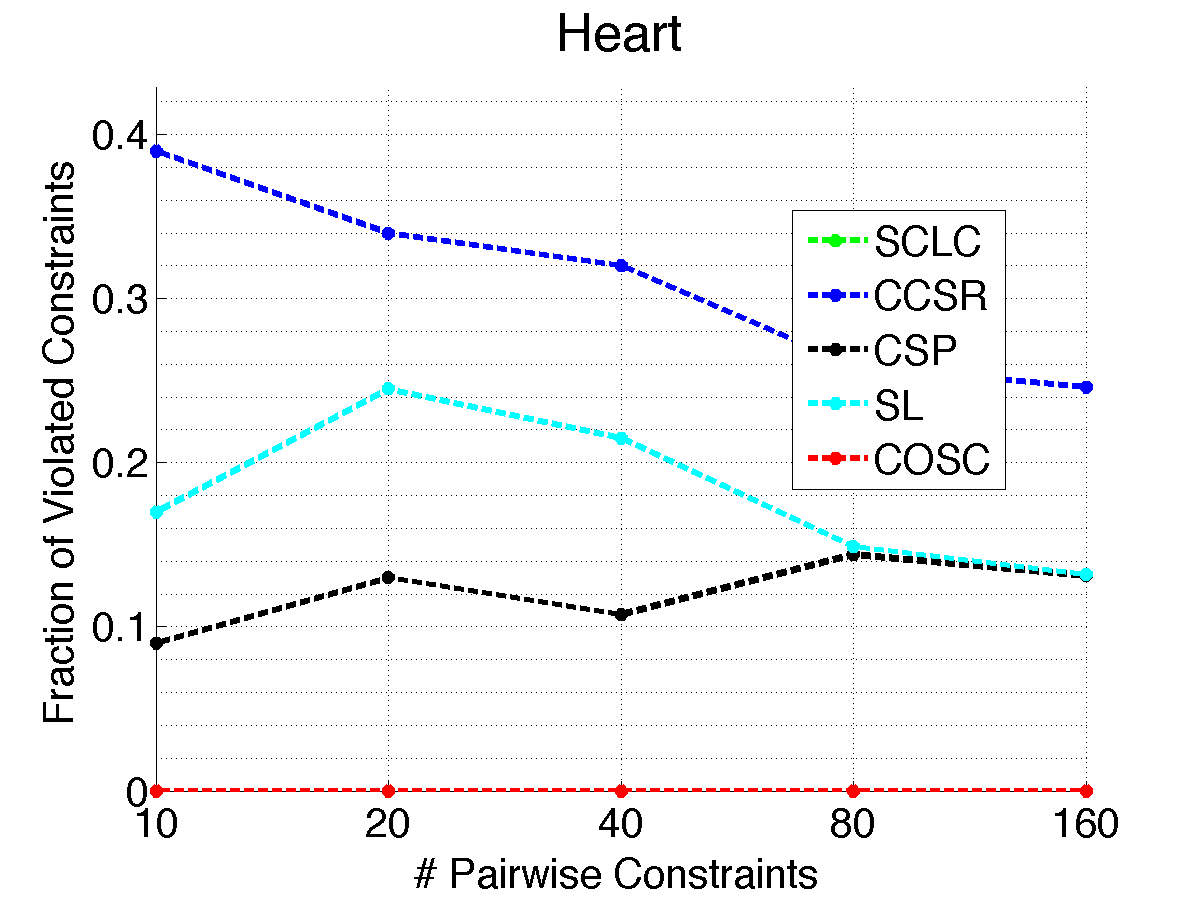}\\
\includegraphics[width=0.31\textwidth]{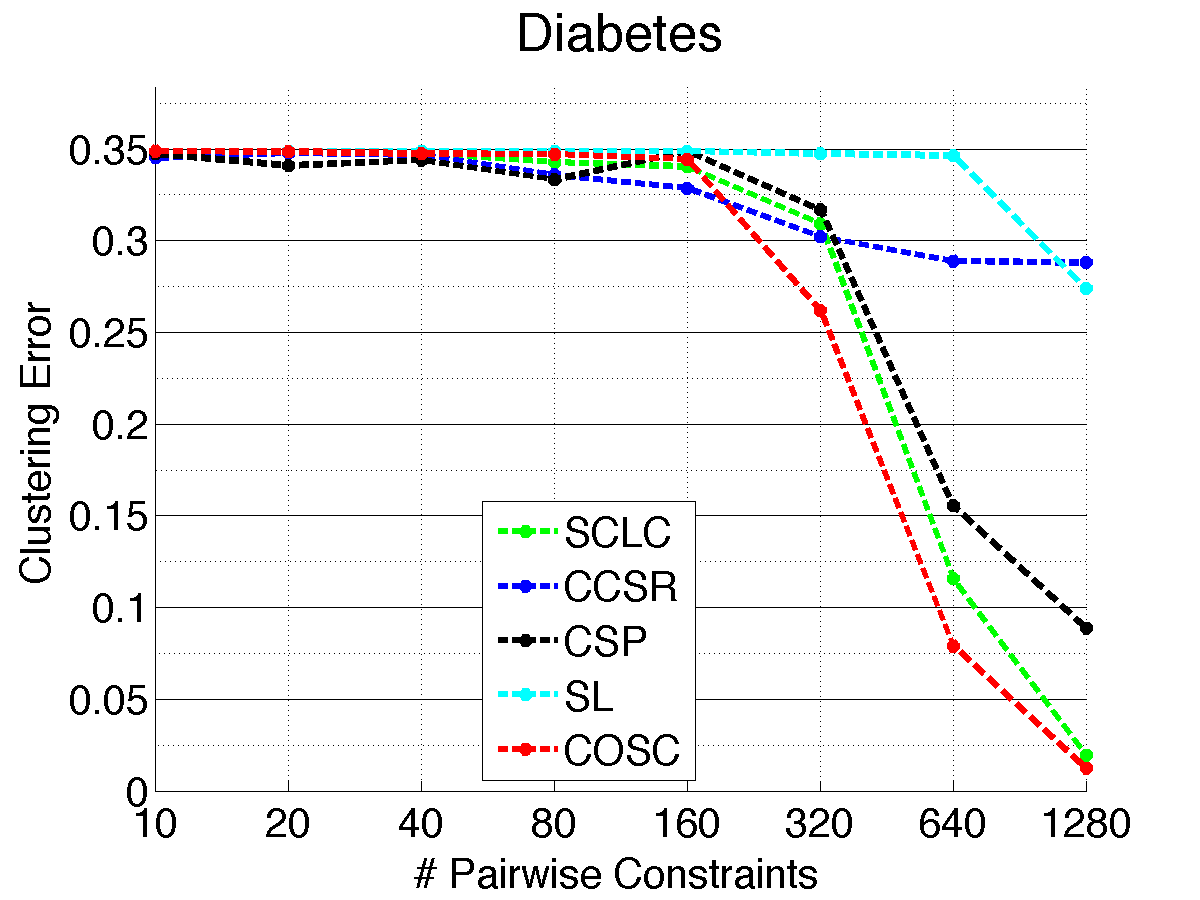}
&  \includegraphics[width=0.31\textwidth]{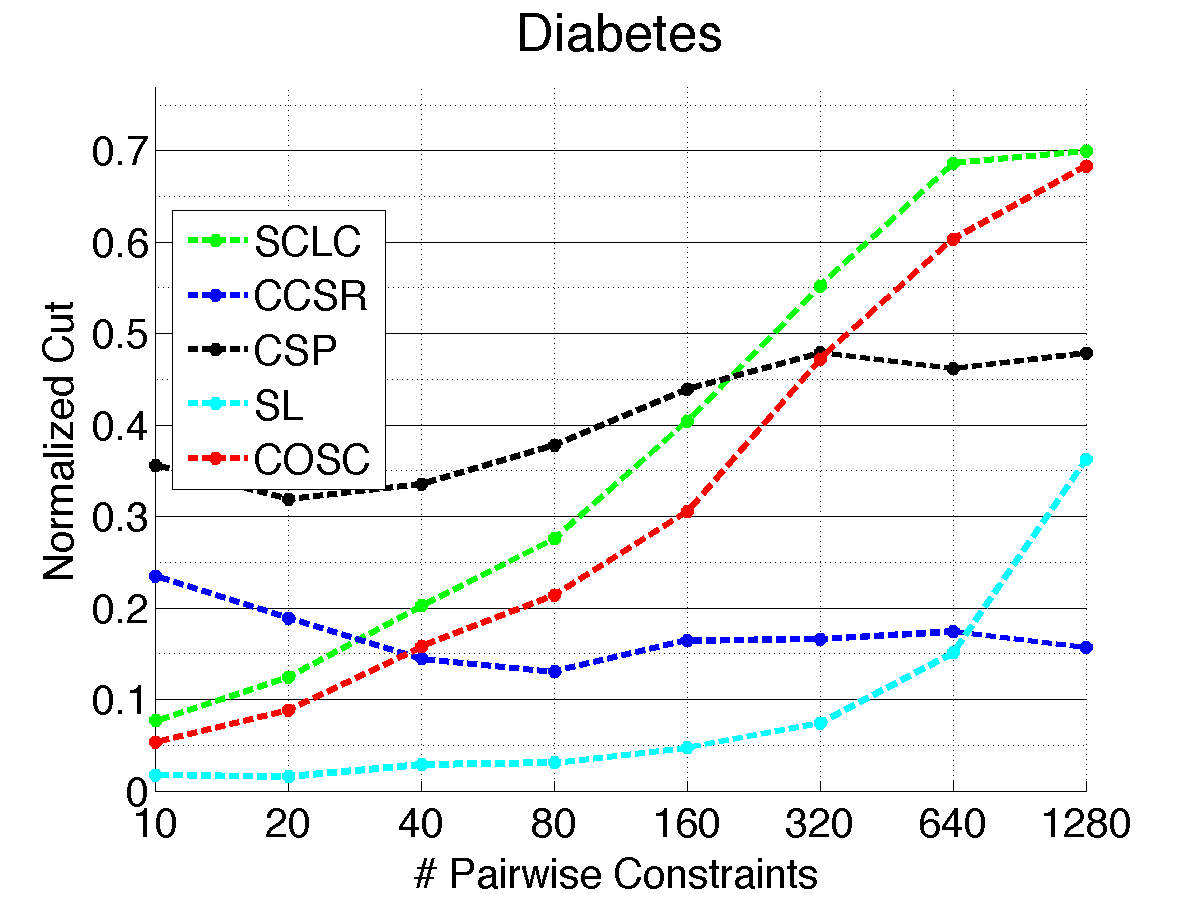}
  	&\includegraphics[width=0.31\textwidth]{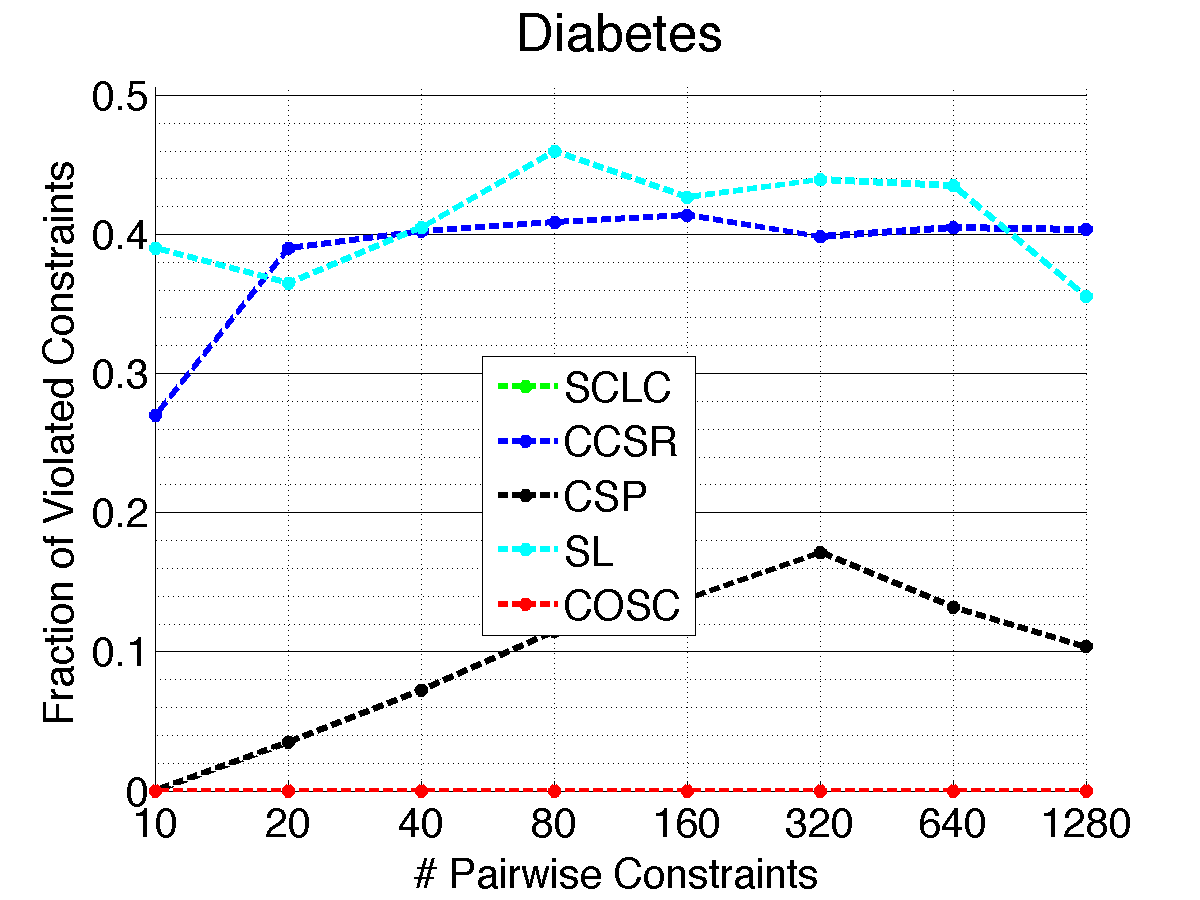}\\
\end{tabular}
\caption{\label{app:tab:Plots}Results for \textbf{binary partitioning}: Left: clustering error versus number of constraints, Middle: normalized cut versus number of constraints, Right: fraction of violated constraints versus number of constraints. }
\end{table*}

 \begin{table*}
\begin{tabular}{ccc}
\includegraphics[width=0.31\textwidth]{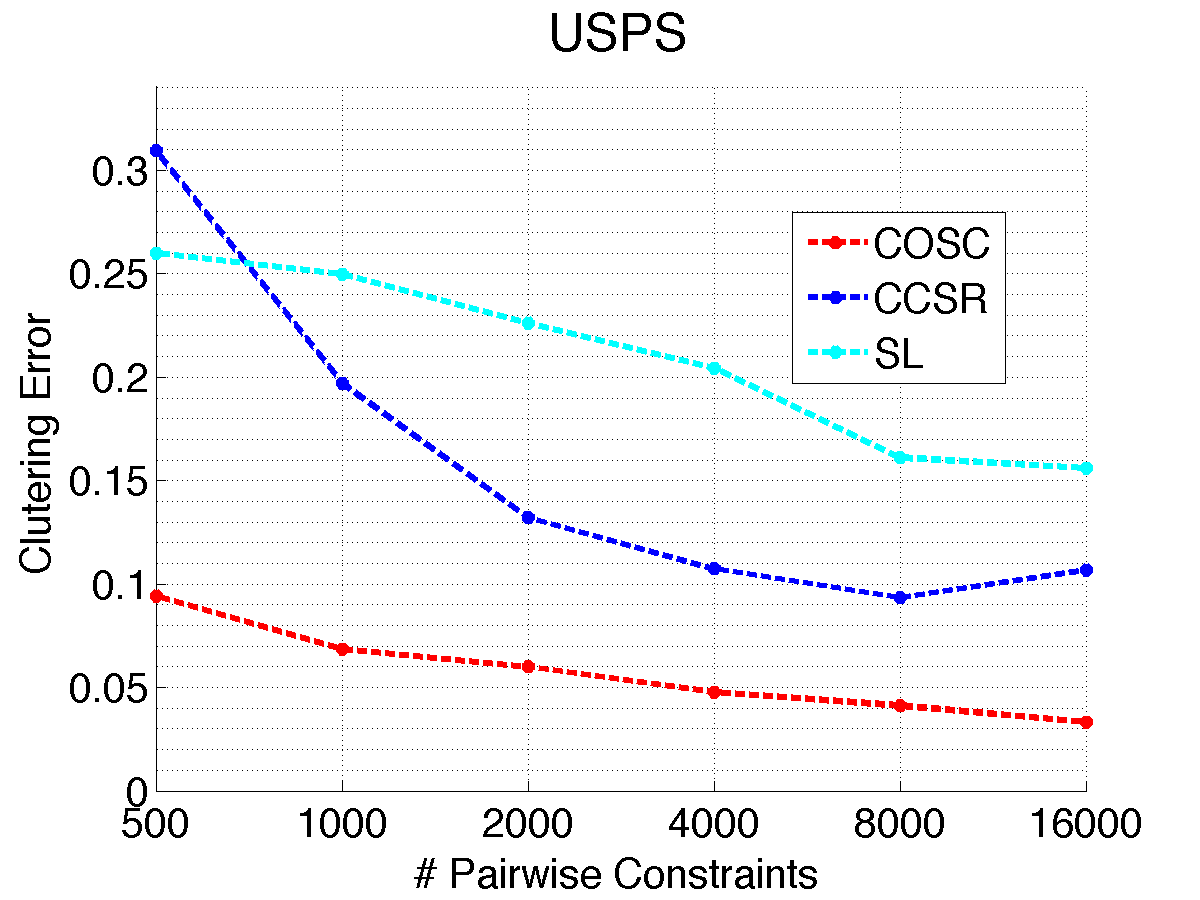}
&  \includegraphics[width=0.31\textwidth]{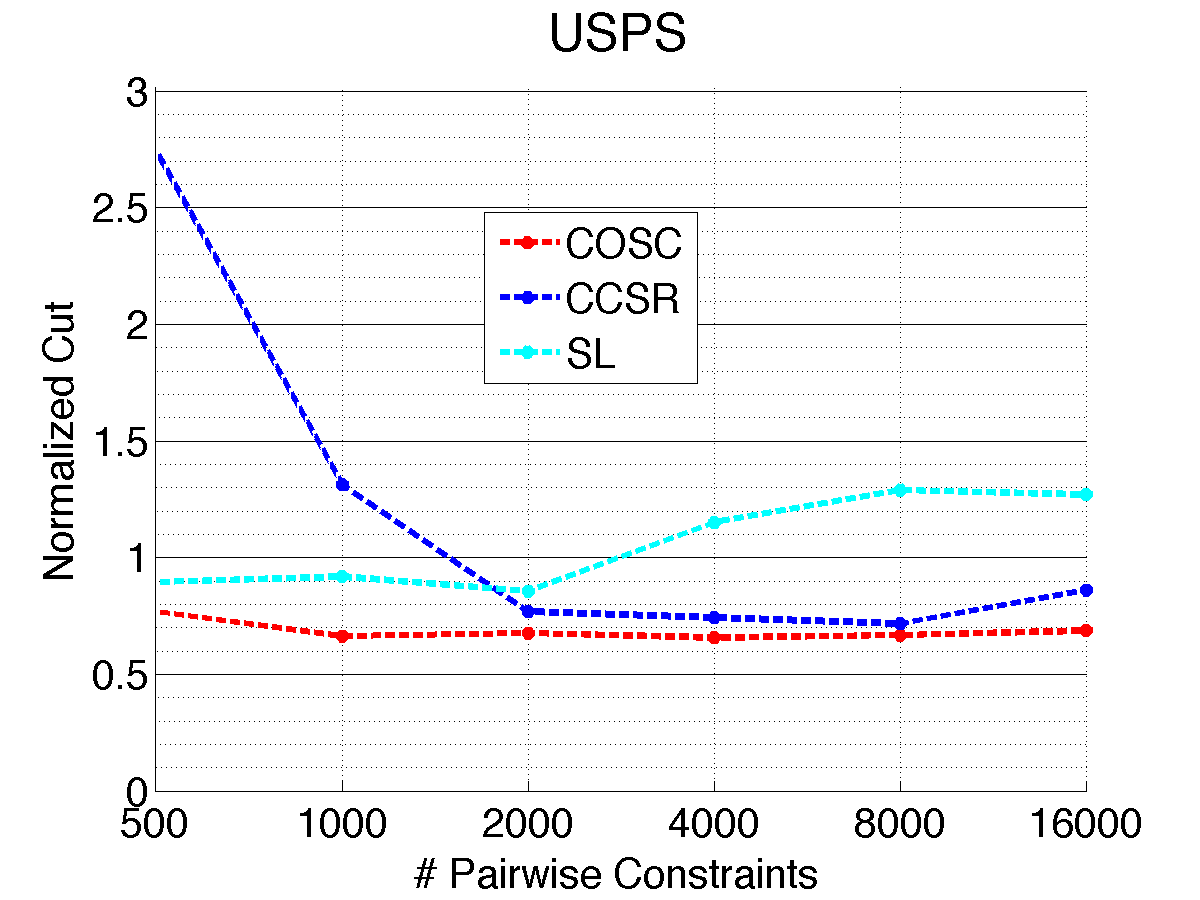}
	&\includegraphics[width=0.31\textwidth]{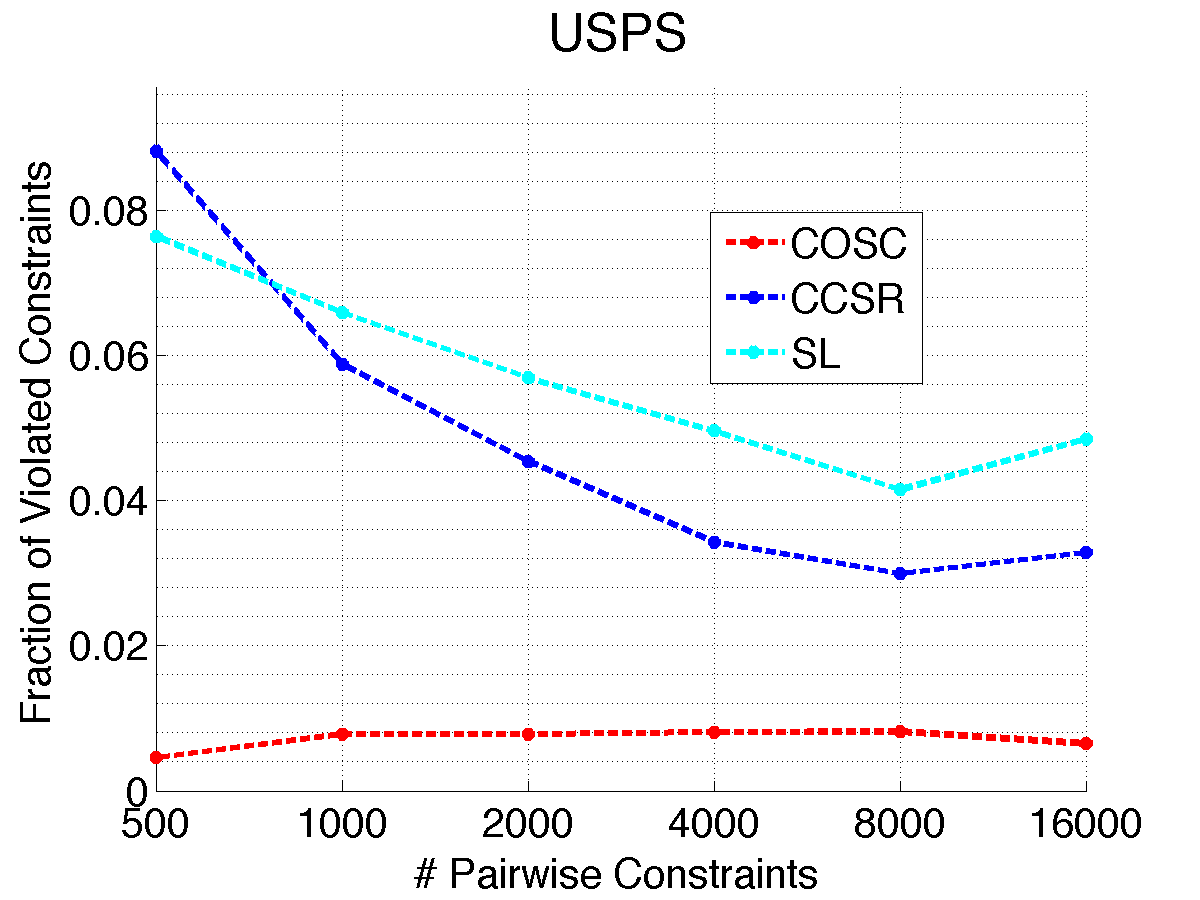}\\
	\includegraphics[width=0.31\textwidth]{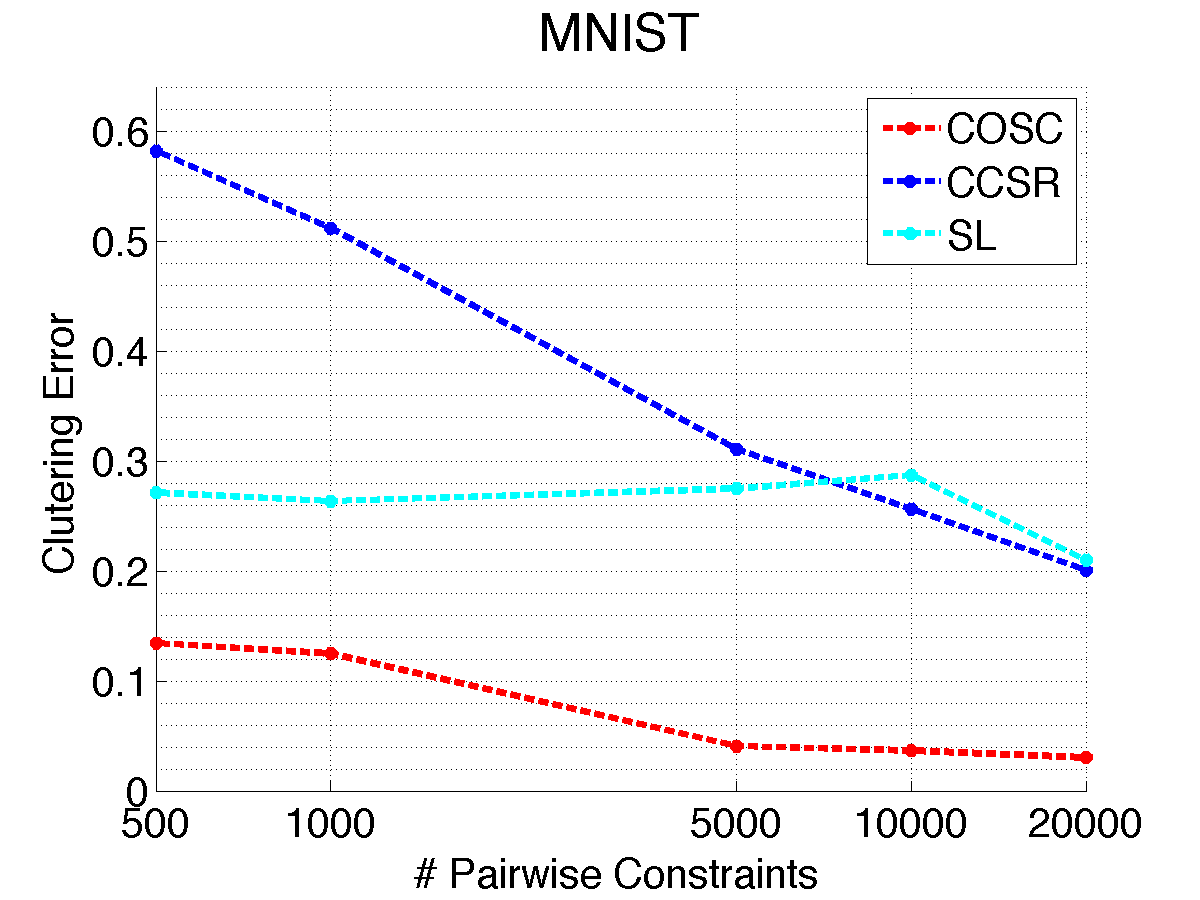}	
	  &\includegraphics[width=0.31\textwidth]{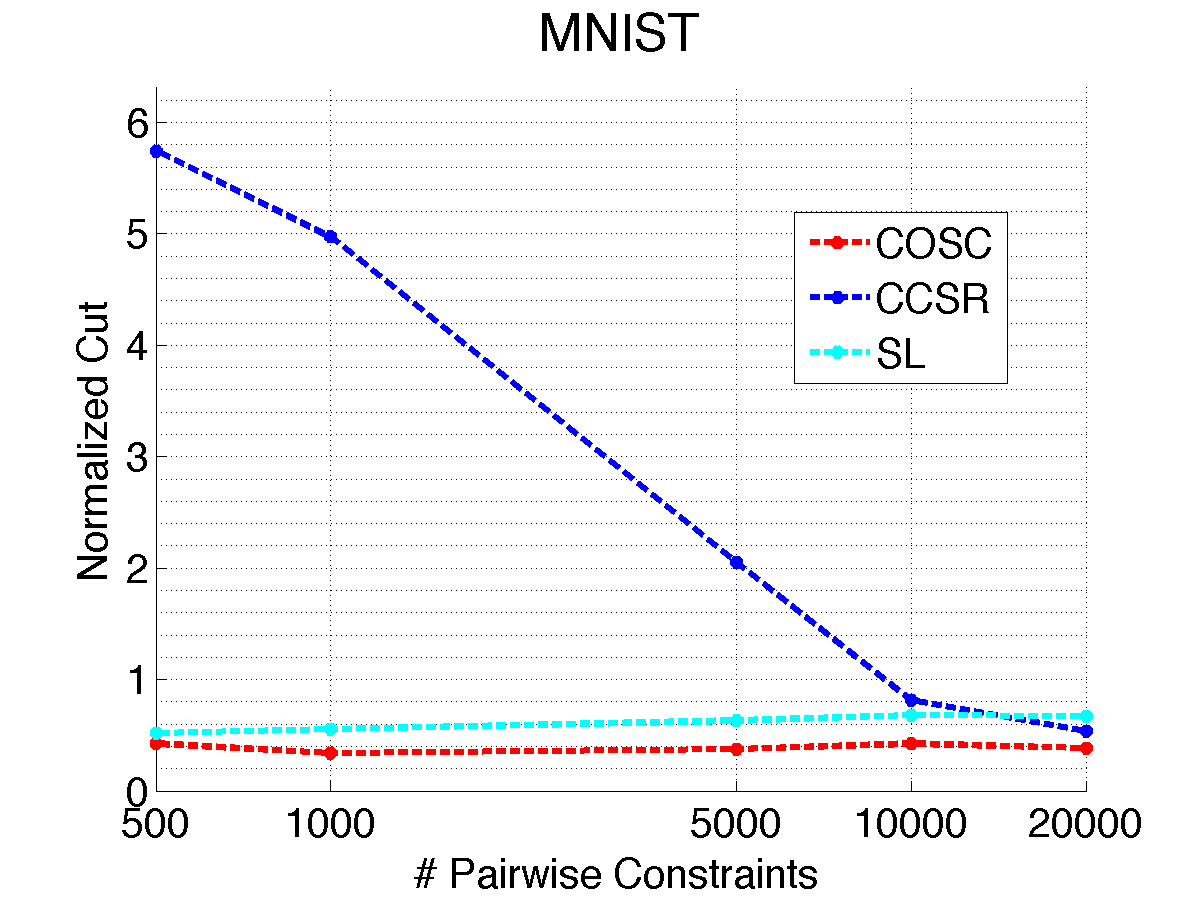}
	&\includegraphics[width=0.31\textwidth]{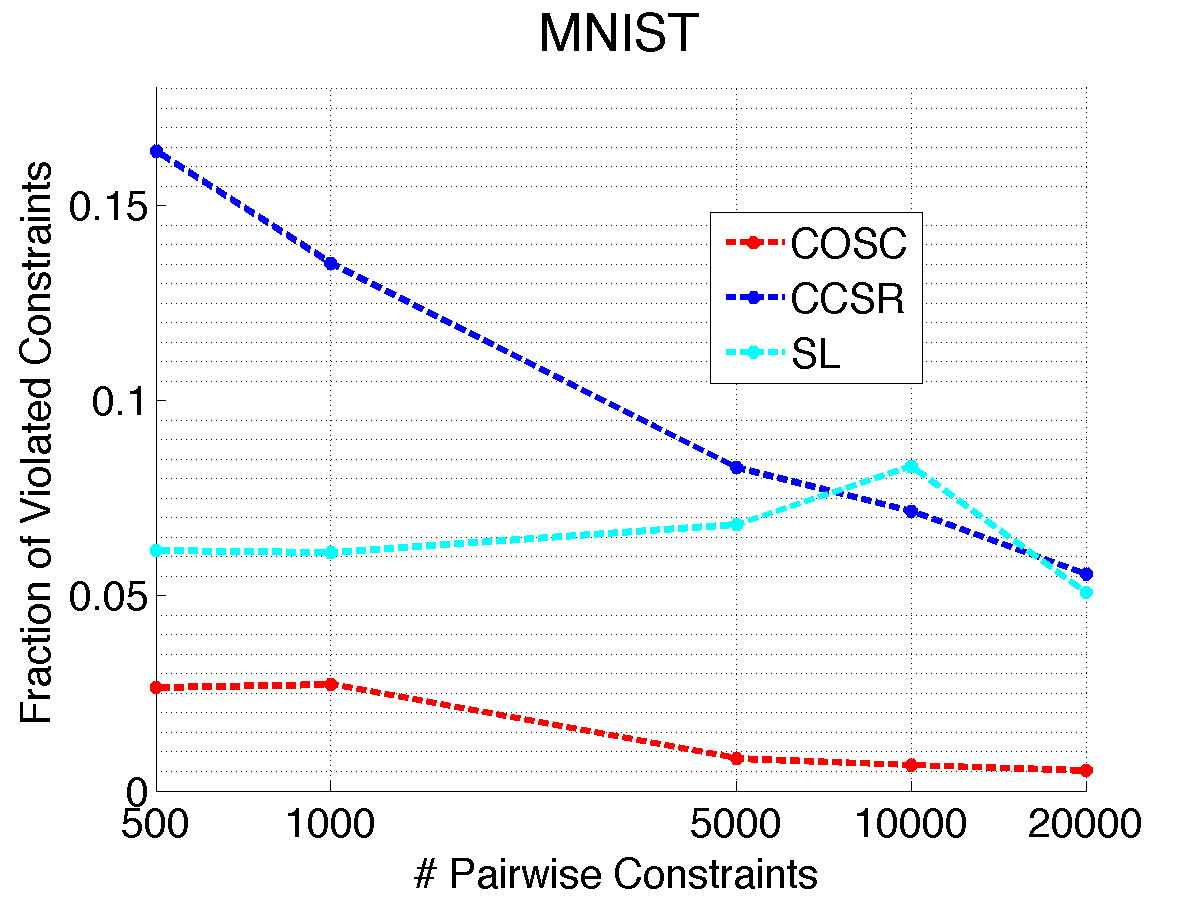}\\
		\end{tabular}
\caption{\label{app:tab:Plots_multi}Results for \textbf{multi-partitioning} - Left: clustering error versus number of constraints, Middle: normalized cut versus number of constraints, Right: fraction of violated constraints versus number of constraints.}
\end{table*}

\newpage 
\bibliographystyle{plain}
\bibliography{cnstr_clustering}

\end{document}